\documentclass[authoryear]{eshbaugh}

\newcommand{\citep}{\parencite}
\newcommand{\citet}{\textcite}

\title{Align Before You Combine: Reference Space Calibration for Supervision Without Ground Truth}

\shorttitle{Align Before You Combine}

\author{Jackson Eshbaugh}
\affiliation{Department of Computer Science\\Lafayette College\\Easton, PA, USA}
\email{eshbaugj@lafayette.edu}
\orcid{0009-0009-1806-2166}

\author{Jorge Silveyra}
\affiliation{Department of Computer Science\\Lafayette College\\Easton, PA, USA}
\email{silveyj@lafayette.edu}
\orcid{0009-0004-9769-7250}

\publicationstatus{Preprint}
\version{1.0}
\preprintdate{October 2026}

\begin{document}

\maketitle

\begin{abstract}
We introduce a calibration-first framework that produces supervision scores without access to ground-truth labels or a shared annotation space.
Our framework aligns subset-specific scorers using a synthetic ordinal reference space before fusion.
This reference space is constructed from ordered calibration features that represent the latent concept, providing a common scale on which otherwise incomparable scorer outputs can be aligned.
Because our calibration procedure uses the reference space rather than training samples, it is independent of the training set’s empirical distribution.
Across three benchmark datasets, our framework consistently outperforms uncalibrated averaging and achieves higher primary-metric point estimates on the evaluation metrics than the best individual scorer. Performance relative to sample-dependent baselines varies by domain, with absolute differences below 0.02 on Ames Housing and below 0.01 on Breast Cancer Wisconsin and Wine Quality. 
After Bonferroni correction, differences remain significant for all three comparisons on Ames Housing and one on Breast Cancer Wisconsin.
Additionally, we show that using fewer calibration levels per feature can closely approximate higher-resolution results at substantially lower computational cost. Together, these results support our framework as a viable approach to construct supervision scores when neither ground-truth labels nor a shared annotation space is available.
\end{abstract}

\section{Introduction}
\label{sec:intro}

Heterogeneous evidence sources are often used to train machine learning systems, yet they frequently lack ground-truth labels or a shared annotation space. Current approaches, such as weak supervision and multimodal fusion, typically assume that supervision sources operate within a common target space. This assumption is problematic when supervision is derived from heterogeneous subset-specific scorers that operate on different evidence representations and incomparable numerical scales. In such settings, each source may act as a noisy semantic observer of an underlying latent concept, where the resulting signals may exist in different scales and in different representational spaces. Consequently, the outputs cannot be directly compared and require calibration before meaningful fusion can occur. 

In this document, we introduce a calibration-first framework for supervision without ground truth that aligns subset-specific scorers using a synthetic ordinal reference space before fusion. Our framework uses this reference space as a semantic calibration mechanism. By evaluating subset-specific scorers relative to this shared reference space, heterogeneous evidence sources can be aligned onto a common semantic scale prior to fusion. The resulting aligned scores are then fused into a supervision score. Ultimately, this approach enables supervision from heterogeneous sources where neither labeled data nor a shared annotation space is available.

To assess the viability of our framework, we conducted a case study in a residential energy-retrofit application---where text-derived and structured scorers operate on different numerical scales---alongside benchmarks from the housing and medical domains. Our contributions are: (1) a calibration-first framework for supervision without ground truth; (2) a synthetic ordinal reference space for aligning heterogeneous scorers prior to fusion; and (3) formal properties and empirical evidence demonstrating ordinal preservation and improved supervision quality.

\section{Related Work}
\label{sec:rw}
Given the diverse scope of this problem and to contextualize the landscape in which our work lies, we review three related research areas that address different challenges. Specifically, (1) weak supervision focuses on learning from imperfect or noisy labels, (2) multimodal learning addresses the integration of heterogeneous data representations, and (3) latent construct research studies how unobservable concepts can be inferred from indirect sources. 

Modern weak supervision research has given rise to programmatic weak supervision (PWS), which combines multiple distant supervision approaches through noisy labeling functions and latent label modeling~\citep{zhangSurveyProgrammaticWeak2022}. Current PWS involves users encoding domain knowledge into noisy, explicitly engineered labeling functions that collectively enable latent-label estimation without direct supervision and generally assumes these functions produce values within a shared target space~\citep{Ratner2017,cachay2021endtoend,boecking2021interactive,pmlr-v151-yu22c}. Recent work has extended the PWS paradigm by constructing probabilistic supervision from multiple imperfect diagnostic tests or expert assessments through latent class modeling~\citep{yazbeckLatentSupervisionMethod2026}, and by exploring constraint-based alternatives to latent generative label models~\citep{ICLR2025_e91fb65c}. These approaches still assume supervision sources operate within a common label space and focus on recovering latent labels rather than aligning heterogeneous scoring functions defined on incomparable semantic scales. 

A related line of research, multimodal machine learning, concentrates on integrating heterogeneous sources such as text, structured measurements, and simulation outputs. Most multimodal systems assume ground-truth supervision and treat modalities as complementary predictive features aligned to a shared target, combining them through fusion strategies to improve downstream performance~\citep{ngiam2011multimodal, Baltrusaitis2019, Liang2024}. Recent developments have also explored calibration mechanisms for combining heterogeneous information sources, demonstrating that alignment of subset-specific signals prior to fusion can improve downstream retrieval and reasoning performance~\citep{bacellarCalibratedFusionHeterogeneous2026}. These models are generally trained using observable ground truth targets.

A third area of work is related to measuring latent constructs---constructs that are not directly observable. In fields such as education, psychology, and other social sciences, psychometrics is used to measure latent constructs, such as intelligence, personality traits, or educational proficiency. Classical test theory and Item Response Theory (IRT) model observed responses as imperfect indicators of an underlying latent variable and provide mechanisms for placing heterogeneous observations onto a common measurement scale~\citep{furrPsychometrics2008,embretsonItemResponseTheory2025,raschProbabilisticModelsIntelligence1980}. Construct validity emphasizes that meaningful measurement requires a theoretically justified relationship between observed indicators and the latent construct being measured~\citep{cronbachConstructValidityPsychological1955,clarkConstructingValidityNew2019}. 

The existing literature provides several foundations for the problem considered in this work, but none directly address supervision construction from heterogeneous evidence sources when no common target space or ground-truth labels are available. Specifically, our work expands beyond prior research in three key ways:

\begin{enumerate}
    \item \textbf{Beyond a Shared Space}: Existing weak supervision frameworks assume that supervision sources already operate within a shared target space. In contrast, we construct a common semantic scale on which heterogeneous subset-specific scorers can be meaningfully compared.

    \item \textbf{Beyond Multimodal Prediction}: Multimodal machine learning typically assumes an observable target variable and uses modalities as complementary predictive features. Our goal is not to improve prediction through fusion but to construct supervision itself, aligning subset-specific observations when no ground-truth target is available.

    \item \textbf{Beyond Latent Construct Estimation}: Psychometrics addresses the problem of measuring latent constructs through the use of multiple imperfect indicators and provides mechanisms for placing heterogeneous observations onto a common measurement scale. Instead, our approach constructs an external reference structure composed of synthetic examples enabling supervision without ground-truth labels or a pre-existing measurement instrument.
\end{enumerate}

 We address the challenge of constructing supervision from heterogeneous subset-specific scorers that operate on incomparable semantic scales in the absence of ground-truth labels using a calibration-first framework that uses a synthetic semantic reference space to align heterogeneous scorers before fusion.

\section{Framework}
\label{sec:framework}

Our framework consist of two operational phases: (1) calibration using a synthetic ordinal reference space and (2) scoring and fusion of unlabeled instances\footnote{Our code is available on GitHub: \url{https://github.com/Lafayette-EshbaughSilveyra-Group/calibration-first}}. These phases are depicted in Figure~\ref{fig:framework}. We first introduce the necessary preliminaries before describing each phase.

\begin{figure*}[t]
    \centering

    \begin{subfigure}[t]{0.48\textwidth}
        \centering
        \includegraphics[width=\linewidth]{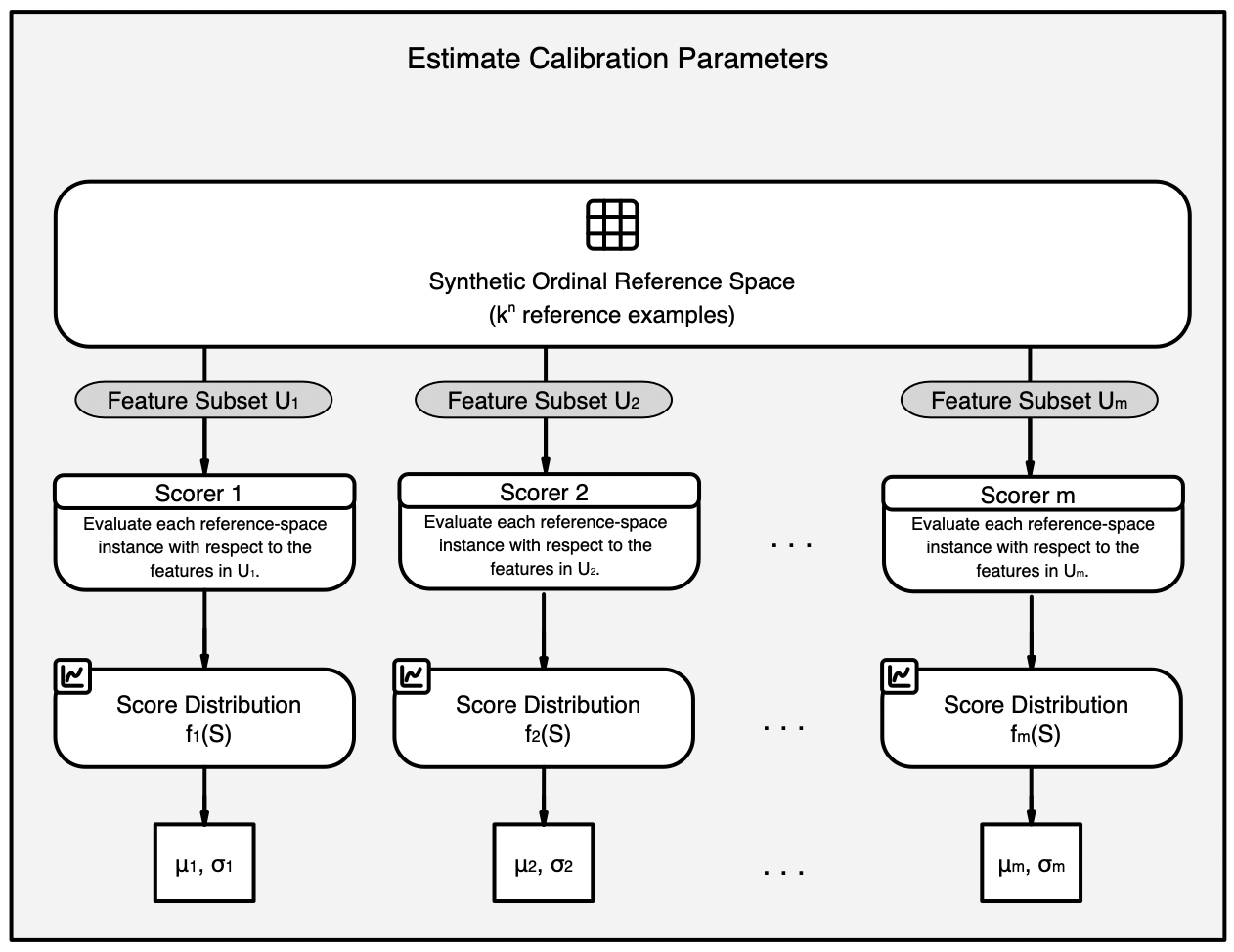}
        \caption{Calibration. Subset-specific scorers are evaluated over the synthetic ordinal reference space to estimate normalization statistics \((\mu_j,\sigma_j)\).}
        \label{fig:framework_calibration}
    \end{subfigure}
    \hfill
    \begin{subfigure}[t]{0.48\textwidth}
        \centering
        \includegraphics[width=\linewidth]{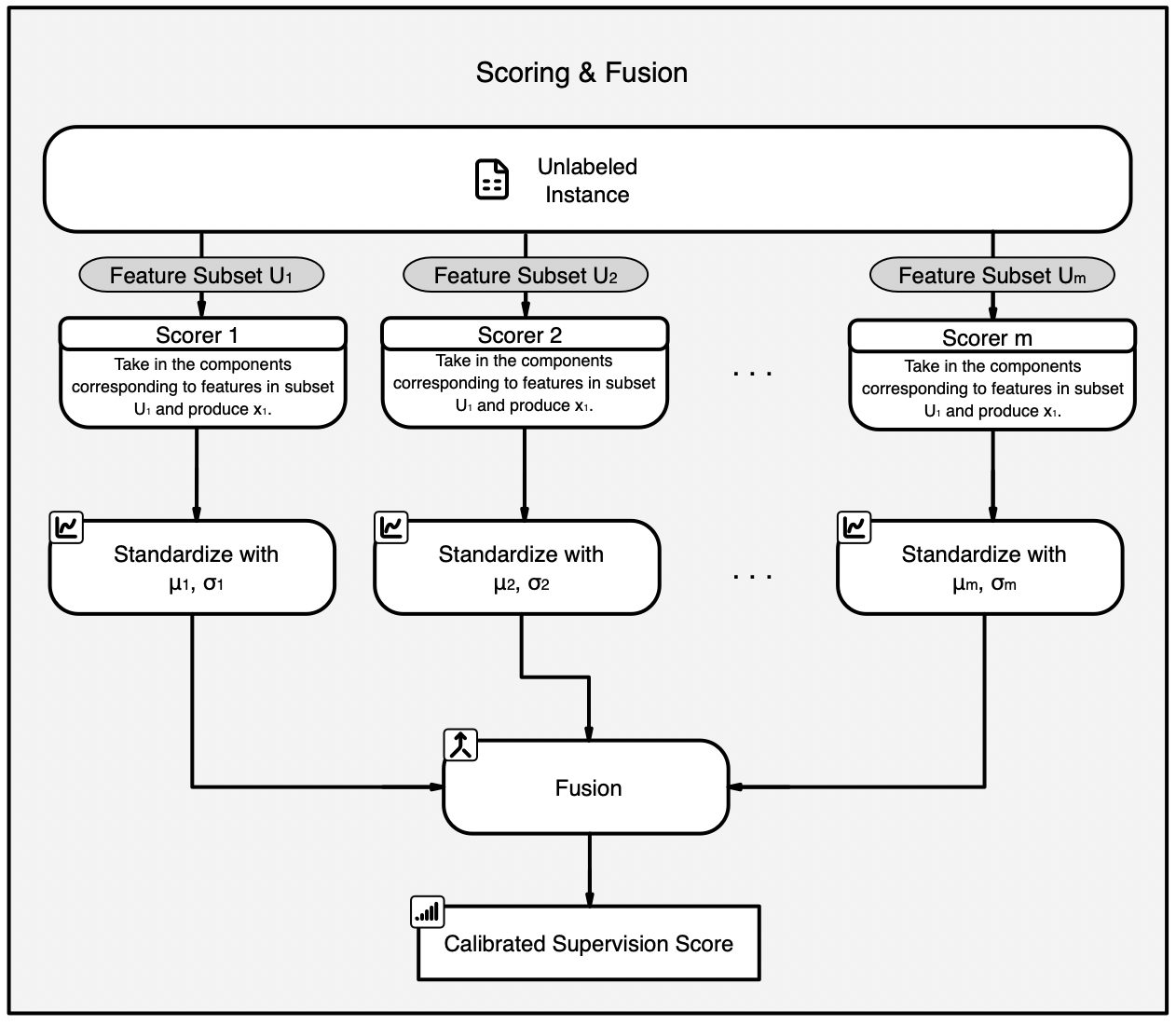}
        \caption{Scoring and fusion. Scores from a new unlabeled instance are standardized using the normalization statistics and fused into the supervision score \(\kappa\).}
        \label{fig:framework_fusion}
    \end{subfigure}

    \caption{
    Calibration-first supervision framework. During calibration, subset-specific scorers are evaluated over a synthetic ordinal reference space to estimate normalization statistics. During inference, scores for a new unlabeled instance are standardized using these statistics and fused into a calibrated supervision score \( \kappa \).
    }
    \label{fig:framework}
\end{figure*}

\subsection{Preliminaries}
\label{sec:framework:prelims}

\subsubsection{Notation and Assumptions}
\label{sec:framework:prelims:notation_assmptn}

We assume that the latent concept is characterized by \( n \) calibration features, contained in the set \( \mathcal{F} \). For each feature \( i \in \mathcal{F} \), let its set of possible values be denoted by \( A_i=\{a \mid b_i \preceq a \preceq c_i\}, \) where \( b_i \) and \( c_i \) denote the minimum and maximum values for feature \( i \), respectively, and \( \preceq \) denotes the feature's natural ordering. The Cartesian product \( \mathcal{A} = \prod_{i=1}^{n} A_i \) defines the ambient space of all possible configurations of the calibration features, where each element of \( \mathcal{A} \) is an ordered \( n \)-tuple specifying one value for every calibration feature. Any observed dataset satisfies \( \mathcal D\subseteq\mathcal A \), the synthetic ordinal reference space \( S \) satisfies \( S\subseteq\mathcal A \), and any new unlabeled instance satisfies \( x\in\mathcal A \).

Another foundational assumption of the framework is that each calibration feature is monotonically related to the latent concept of interest. Although observations of these features through different evidence sources may be noisy, incomplete, or expressed on incomparable scales, movement along the ordered levels of any calibration feature corresponds to consistent movement along the latent concept. Under this assumption, the synthetic reference space induces a meaningful ordinal structure over the latent concept, and subset-specific observations can be interpreted as noisy evidence of an instance's position within that structure. The monotonicity assumption defines a validity condition of the framework. If it is violated, the induced ordinal reference structure need not correspond to the latent concept, and no semantic validity guarantee is made for the resulting supervision score.

\subsubsection{Subset-Specific Scorers}
\label{sec:framework:prelims:scoring}

It is necessary to estimate an instance's position along the latent concept. The framework uses a collection of subset-specific scorers to determine this from heterogeneous observations. Each subset \( U \subseteq \mathcal{F} \) is selected a priori to represent a coherent portion of the latent concept. Its corresponding scorer is a function: 
\[
F_U : \prod_{i \in U} A_i \to \mathbb{R}
\]

A subset may correspond to a conventional data modality, such as text, images, or simulation output, but may also consist of a coherent group of related features or a concept-specific view of a shared evidence source. Accordingly, multiple features may be evaluated jointly by one scorer, and multiple scorers may operate on the same underlying observation to estimate different aspects of the latent concept.

Scorers may be machine learning-based or heuristic. A \textit{machine learning scorer} uses a learned model, such as a decision tree, multilayer perceptron, or large language model, to map the evidence associated with its designated feature subset to a scalar estimate. A \textit{heuristic scorer} deterministically computes a scalar estimate using a predefined function derived from expert knowledge or application-specific rules, such as linear scoring rules, deterministic scaling functions, or rule-based systems.

These scorers are used throughout both operational phases of the framework. During calibration, each scorer is evaluated over the evidence generated from the synthetic ordinal reference space introduced in Section~\ref{sec:framework:phases:calibration} to estimate scorer-specific calibration statistics (Figure~\ref{fig:framework_calibration}). During inference, the same scorers are applied to new unlabeled instances, and their outputs are standardized and fused to produce the final supervision score (Figure~\ref{fig:framework_fusion}).

The framework accommodates any combination of machine learning or heuristic scorers, provided that each produces a real-valued output whose ordering is consistent with the latent concept. To ensure meaningful aggregation across subset-specific scorers, scorer outputs must also share a common directionality, such that larger values consistently indicate greater correspondence to the latent concept or vice versa.

\subsection{Operational Phases}
\label{sec:framework:phases}
Given the monotonicity assumption and a collection of subset-specific scorers, the framework proceeds in two operational phases. The first evaluates each scorer over evidence generated from a synthetic ordinal reference space to estimate scorer-specific calibration statistics. The second applies the same scorers to previously unseen instances, standardizes their outputs using the corresponding calibration statistics, and fuses the standardized scores to produce a supervision score.

\subsubsection{Calibration via a Synthetic Reference Space}
\label{sec:framework:phases:calibration}

Calibration is performed using a synthetic ordinal reference space that defines a shared semantic reference for the latent concept across all subset-specific scorers. Because \( S \) is defined independently of the observed dataset, calibration is anchored to a domain-defined semantic range rather than to the empirical distribution of a particular sample. Beginning with the \( n \) calibration features introduced in Section~\ref{sec:framework:prelims:notation_assmptn}, each feature is discretized into \( k \) ordered levels spanning an observable range. These levels must preserve the assumed monotonic relationship between the feature and the latent concept, but need not be uniformly spaced. Collectively, they define the synthetic ordinal reference space used for calibration.

For a given application, the calibration levels may be obtained by partitioning each calibration feature into \( k \) ordered bins spanning its observable range. Let \( \mathcal{L}=\{L_1,L_2,\dots,L_n\} \) denote the collection of calibration-level sets, where each
\( L_i=\{\ell_{i1},\ell_{i2},\dots,\ell_{ik}\} \) contains the \( k \) ordered calibration levels associated with feature \( i \). Each calibration-level set is a subset of the corresponding feature domain; that is, \( L_i\subseteq A_i \). Calibration levels may be defined using domain knowledge, bin midpoints, empirical means, or other application-specific procedures. The framework is agnostic to this choice, requiring only that the resulting levels preserve the monotonic ordering assumed between the calibration feature and the latent concept.

The synthetic ordinal reference space is constructed as the Cartesian product of the calibration-level sets:
\[
S=\prod_{i=1}^{n}L_i.
\]
By construction, \( S\subseteq\mathcal A \). Each element \( s\in S \) represents a calibration point corresponding to a particular combination of ordinal feature levels. Collectively, these points span the ordinal range of the latent concept and provide a shared reference space for aligning scorer outputs.

For each calibration point, the evidence required by each subset-specific scorer is constructed from the corresponding feature configuration. Depending on the scorer, this evidence may take the form of a conventional modality, a derived representation, or a coherent group of calibration features. The resulting evidence is then evaluated using the same subset-specific scorer that will later be applied to unlabeled instances. This ensures that each scorer's calibration statistics are estimated from outputs produced by the scorer itself over its corresponding evidence representation.

In the worst case, calibration requires evaluating all \( |S|=k^n \) reference points with each of the \( m \) subset-specific scorers to estimate scorer-specific calibration statistics \( (\mu_j,\sigma_j) \), where \( j\in\{1,\dots,m\} \) indexes the scorers. If multiple calibration points produce the same evidence representation for a scorer, the corresponding evaluation may be cached and reused, reducing the number of distinct scorer evaluations without changing the reference distribution.

If scorer \( j \) has evaluation cost \( c_j \), the worst-case total calibration cost scales as \( O\left(k^n\sum_{j=1}^{m}c_j\right). \)
This cost is incurred only once during calibration to estimate the normalization statistics. Thereafter, scoring a new instance requires evaluating each subset-specific scorer once, yielding a complexity of \( O\left(\sum_{j=1}^{m}c_j\right). \)

The choice of \( k \) governs a tradeoff between calibration resolution and computational cost. Larger values of \( k \) produce a finer discretization of the calibration features, potentially yielding more representative normalization statistics, but increase the size of the reference space exponentially. In practice, \( k \) should be chosen to adequately resolve the ordinal structure of the latent concept while remaining computationally tractable given the cost of evaluating each scorer.

These design choices result in a calibration procedure that is not necessarily tied to a particular dataset. By constructing the reference space with this approach, we obtain the following benefits:

\begin{itemize}
\item \textbf{Comparability:} A calibrated value has the same intended interpretation across datasets.
\item \textbf{Stability:} Changes in sample composition do not themselves change the calibration mapping.
\item \textbf{Transferability:} Calibration learned from the synthetic reference space can be reused on new datasets rather than recomputed from each empirical distribution.
\item \textbf{Cleaner separation:} Changes in supervision scores reflect scorer responses to examples, rather than changes in the distribution used to normalize the scorers.
\end{itemize}

\subsubsection{Supervision Score Fusion \& Normalization}
\label{sec:framework:phases:fusion}

Once the calibration phase is complete, a new unlabeled instance \( x\in\mathcal A \) is evaluated by each subset-specific scorer, producing raw scores
\[
\{s_j(x)\}_{j=1}^{m},
\]
where \( j\in\{1,\dots,m\} \) indexes the scorers. Directly combining these raw scores risks privileging scorers with larger numerical ranges. Each score is therefore standardized using its corresponding calibration statistics \( \mu_j \) and \( \sigma_j \). The standardized output of scorer \( j \) is given by Equation~\ref{eq:standardized}, where \( \mu_j \) and \( \sigma_j \) are obtained as described in Section~\ref{sec:framework:phases:calibration}.

\begin{equation}
\label{eq:standardized}
\hat{s}_j(x)
=
\frac{s_j(x)-\mu_j}{\sigma_j}
\end{equation}

This transformation ensures that each subset-specific scorer's contribution to the fused signal reflects its relative position within the corresponding reference distribution rather than its original numerical scale. Standardization therefore aligns differences in scorer location and positive linear scale while preserving the ordering of scorer outputs. Because the transformation has positive slope \( 1 / \sigma_j \) when \( \sigma_j > 0 \), scorers exhibiting zero variance across the calibration space are excluded from fusion because they provide no discriminative information regarding the latent concept. The ordinal information provided by any monotone scorer is therefore unchanged by standardization. This property is formalized in Proposition~\ref{prop:ordinal_preservation}, proved in Appendix~\ref{app:props_proofs}.

The fused supervision score \( \kappa(x) \) is computed as the arithmetic mean of the standardized scorer outputs according to Equation~\ref{eq:overall_lab}.

\begin{equation}
\label{eq:overall_lab}
\kappa(x)
=
\frac{1}{m}
\sum_{j=1}^{m}
\hat{s}_j(x).
\end{equation}

In the absence of ground truth, we adopt equal weighting across the predefined subset-specific scorers as a neutral default because no empirical basis exists for estimating scorer reliability or assigning differential confidence to individual scorers. The calibration step removes differences in numerical scale, allowing each scorer to contribute equally according to its position within its own reference distribution. The set of subset-specific scorers is defined before calibration and held fixed during inference so that the weighting induced by the fusion rule remains consistent. Proposition~\ref{prop:fusion_consistency}, proved in Appendix~\ref{app:props_proofs}, shows that the fused supervision score ranks instances according to aggregate calibrated support across scorers. Together with Proposition~\ref{prop:ordinal_preservation}, this establishes that calibration preserves scorer-specific ordinal information while fusion combines that information into a single supervision score.

\section{Evaluation}
\label{sec:exp}
We evaluate the proposed framework through cross-domain validation and application-specific analyses. The cross-domain experiments evaluate the resulting supervision scores against held-out labels, while the application-specific experiments examine whether the framework exhibits the behavior characterized by Propositions~\ref{prop:ordinal_preservation} and~\ref{prop:fusion_consistency}. Given the framework's conceptual relationship to latent-variable measurement, we additionally compare it with traditional psychometric methods in Appendix~\ref{app:psychometrics}.

\subsection{Cross-Domain Validation}
\label{sec:exp:cross_domain}

We evaluate the performance correspondence of calibration-first fusion with held-out targets relative to several established and natural comparison baselines. 
Specifically, we carry out this assessment across three datasets from unrelated domains:
Ames Housing~\citep{decockAmesIowaAlternative2011}, a regression benchmark;
Breast Cancer Wisconsin~\citep{breast_cancer_wisconsin_diagnostic_17}, a medical classification benchmark; and
Wine Quality~\citep{cortez2009modeling}, a physicochemical benchmark. 
For each dataset, we compare against empirical \( z \)-scoring, unsupervised PCA, Borda rank averaging, uncalibrated averaging, and the best individual subset-specific scorer's value.

For each dataset, we defined four heuristic subset-specific scorers whose feature domains are disjoint.
Following Section~\ref{sec:framework:phases:calibration}, we constructed the synthetic ordinal reference space with \( k = 5 \) levels for each of the benchmark's features. Full scorer definitions and calibration ranges are provided in Appendix~\ref{app:cross_domain}.

Table~\ref{tab:cross_domain} summarizes the results, with 95\% confidence intervals for the \textit{primary metrics} constructed using \( B = 50{,}000 \) bootstrap resamples.
Pairwise method comparisons were conducted using paired bootstrap resampling, with the same resampled observations used for all methods within each replicate.
To maintain a family-wise confidence level of at least 95\% across three pairwise comparisons, we constructed Bonferroni-adjusted 98.33\% confidence intervals.

\begin{table*}[t]
\centering
\small
\begin{tabular}{llccc}
\toprule
\textbf{Dataset}
& \textbf{Method}
& \textbf{Primary [95\% CI]}
& \textbf{$\Delta=\kappa-\mathrm{method}$ [95\% paired CI]}
& \textbf{Bonf. [98.33\% CI]} \\
\midrule

Ames Housing
& Calibrated Fusion ($\kappa$)
& $0.890\,[0.876,\,0.903]$
& ---
& --- \\

& Empirical $z$-Score Avg.
& $0.908\,[0.895,\,0.919]$
& $-0.0172\,[-0.0225,\,-0.0122]$
& $[-0.0238,\,-0.0111]$ \\

& Borda Rank Averaging
& $0.905\,[0.893,\,0.916]$
& $-0.0148\,[-0.0214,\,-0.0085]$
& $[-0.0230,\,-0.0072]$ \\

& Unsupervised PCA (PC1)
& $\mathbf{0.909\,[0.896,\,0.919]}$
& $-0.0181\,[-0.0233,\,-0.0131]$
& $[-0.0245,\,-0.0120]$ \\

& Uncalibrated Average
& $0.859\,[0.840,\,0.877]$
& $+0.0311\,[0.0143,\,0.0485]$
& $[0.0109,\,0.0524]$ \\

& Best Single Scorer
& $0.8150$
& ---
& --- \\

\midrule

Breast Cancer
& Calibrated Fusion ($\kappa$)
& $0.984\,[0.976,\,0.991]$
& ---
& --- \\

& Empirical $z$-Score Avg.
& $0.983\,[0.974,\,0.990]$
& $+0.0017\,[0.0004,\,0.0030]$
& $[0.0002,\,0.0034]$ \\

& Borda Rank Averaging
& $0.984\,[0.975,\,0.992]$
& $+0.0002\,[-0.0034,\,0.0041]$
& $[-0.0041,\,0.0051]$ \\

& Unsupervised PCA (PC1)
& $\mathbf{0.987\,[0.978,\,0.993]}$
& $-0.0024\,[-0.0049,\,-0.0001]$
& $[-0.0055,\,0.0003]$ \\

& Uncalibrated Average
& $0.957\,[0.940,\,0.972]$
& $+0.0268\,[0.0140,\,0.0414]$
& $[0.0115,\,0.0449]$ \\

& Best Single Scorer
& $0.9705$
& ---
& --- \\

\midrule

Wine Quality
& Calibrated Fusion ($\kappa$)
& $\mathbf{0.556\,[0.520,\,0.590]}$
& ---
& --- \\

& Empirical $z$-Score Avg.
& $0.546\,[0.510,\,0.582]$
& $+0.0093\,[0.0012,\,0.0176]$
& $[-0.0005,\,0.0194]$ \\

& Borda Rank Averaging
& $0.549\,[0.513,\,0.584]$
& $+0.0069\,[-0.0050,\,0.0189]$
& $[-0.0077,\,0.0215]$ \\

& Unsupervised PCA (PC1)
& $0.546\,[0.509,\,0.581]$
& $+0.0097\,[-0.0061,\,0.0257]$
& $[-0.0094,\,0.0293]$ \\

& Uncalibrated Average
& $0.192\,[0.142,\,0.240]$
& $+0.3641\,[0.3173,\,0.4112]$
& $[0.3062,\,0.4221]$ \\

& Best Single Scorer
& $0.4757$
& ---
& --- \\

\bottomrule
\end{tabular}
\caption{
Cross-domain validation on the primary metric.
Point estimates are reported with percentile-bootstrap 95\% confidence intervals \((B=50{,}000)\).
Pairwise intervals report \( \Delta=\kappa-\mathrm{method} \), with positive values favoring calibrated fusion.
Bonferroni-adjusted 98.33\% confidence intervals control the family-wise error rate across the three datasets separately for each baseline.
The primary metric is Spearman's \( \rho \) for Ames Housing and Wine Quality and AUROC for Breast Cancer Wisconsin. Additional metrics are available in Appendix~\ref{app:cross_domain}.
}
\label{tab:cross_domain}
\end{table*}

Across all three domains, calibrated fusion improves over uncalibrated averaging on the primary metric. 
Moreover, Bonferroni-adjusted paired bootstrap confidence intervals exclude zero in each case, and achieves higher primary-metric point estimates than the best individual scorer.

Furthermore, our results indicate that for empirical \( z \)-scoring, Borda rank averaging, and unsupervised PCA, performance is domain dependent.
On Ames Housing, calibrated fusion is less than \( 0.02 \) lower than all three metrics, with all differences remaining distinguishable from zero after Bonferroni correction.
On Breast Cancer Wisconsin, calibrated fusion is higher than empirical \( z \)-score averaging after correction by \( 0.0017 \), while no adjusted difference is detected relative to Borda rank averaging or PCA.
On Wine Quality, no adjusted difference is detected between calibrated fusion and any of the three metrics.

In general, these results provide a clear indication that our methodology is a viable approach, given that no statistically significant differences were observed for the majority of the metrics. For the remaining metrics, the observed differences were relatively small.

\subsection{Empirical Illustration of Framework Properties}
\label{sec:exp:synth}

To exemplify our framework, we label 247 synthetic homes generated using the pipeline proposed by \citet{eshbaugh2026synthetichomesmultimodalgenerative}. In this setting, the latent concept is residential energy retrofit priority, and the synthetic ordinal reference space is defined by four calibration features: HVAC heating COP, HVAC cooling COP, roof R-value, and wall R-value. Because no ground truth labels are available in this synthetic dataset, we aim to approximate the latent concept as a source of supervision using our framework. Four subset-specific scorers are employed. Each home inspection note is evaluated by an LLM to produce separate HVAC and insulation retrofit-need scores. Two heuristic scorers operate on the structured calibration features: an HVAC scorer jointly evaluates heating and cooling COP, while an insulation scorer jointly evaluates roof and wall R-values.

For this dataset, the calibration reference space consists of all \( 625 \) combinations formed by selecting one of the five ordered levels for each of the four calibration features \( (5^4) \). These levels were selected using domain knowledge to span a broad range of residential efficiency conditions while preserving the expected ordinal relationship between each feature and retrofit need. The four scorer outputs are standardized and fused to produce concept-specific HVAC and insulation scores and an overall supervision score \( \kappa \), with larger values indicating greater retrofit need. Additional implementation details, including the exact calibration values and their interpretation, are provided in Appendix~\ref{app:synthetic_homes}.


We perform a sensitivity analysis to verify that each scorer responds monotonically to its designated input, and that calibration preserves this monotonicity. The inputs to one subset-specific scorer are varied across five ordered efficiency levels while the inputs to the remaining scorers are held fixed at the neutral third level. Level 1 represents the least efficient condition, while Level 5 represents the most efficient condition. Table~\ref{sec:exp:abl:tbl:results} reports the averages of standardized scorer outputs in which negative values indicate retrofit need below the calibration-space mean, while positive values indicate retrofit need above the mean. Across all four experiments, the subset-specific supervision scores did not increase as efficiency improved, indicating that increasing efficiency did not produce an increase in estimated retrofit need. The calibrated scorer outputs also preserved every ordering expressed by their corresponding raw outputs. Across the complete calibration space, no ordinal preservation failures occurred in the \( 195{,}000 \) pairwise comparisons evaluated for each scorer\footnote{The equal scores observed between HVAC levels 4 and 5 and between insulation levels 1 and 2 reflect ties in the corresponding LLM outputs rather than ordering reversals introduced by calibration.}. Hence, the ordinal preservation guarantee in Proposition~\ref{prop:ordinal_preservation} is satisfied empirically for all four subset-specific scorers over the complete calibration space.


\begin{table*}[t]
    \centering
    \begin{tabular}{lrrrrr}
        \toprule
        \textbf{Varied Scorer}
        & \textbf{Level 1}
        & \textbf{Level 2}
        & \textbf{Level 3}
        & \textbf{Level 4}
        & \textbf{Level 5} \\
        \midrule
        Text / HVAC
        & 1.10
        & 0.43
        & 0.26
        & -0.57
        & -0.57 \\

        Structured / HVAC
        & 1.26
        & 0.76
        & 0.26
        & -0.24
        & -0.74 \\

        Text / Insulation
        & 0.94
        & 0.94
        & -0.09
        & -0.24
        & -0.53 \\

        Structured / Insulation
        & 0.91
        & 0.41
        & -0.09
        & -0.59
        & -1.09 \\
        \bottomrule
    \end{tabular}
    \caption{Concept-specific supervision scores under scorer-level sensitivity analysis. Larger values indicate greater retrofit need.}
    \label{sec:exp:abl:tbl:results}
\end{table*}


\begin{table*}[t]
    \centering
    \begin{tabular}{llrr}
    \toprule
    \textbf{Text Evidence}
    & \textbf{Structured Evidence}
    & \textbf{HVAC}
    & \textbf{Insulation} \\
    \midrule
    Level 1 & Level 1 & 2.10 & 1.94 \\
    Level 1 & Level 5 & 0.10 & -0.06 \\
    Level 5 & Level 1 & 0.43 & 0.47 \\
    Level 5 & Level 5 & -1.57 & -1.53 \\
    \bottomrule
\end{tabular}
    \caption{Concept-specific supervision scores. Level 1 is least efficient and Level 5 is most efficient.}
    \label{tab:combined_variation_extremes}
\end{table*}

Additionally, we evaluate the consistency of the outputs of subset-specific scorers in order to validate that their results reflect the quality of their inputs. This process consists of comparing four extreme combinations of the inputs in both formats: text-derived and structured evidence. In Table~\ref{tab:combined_variation_extremes}, we observe that jointly inefficient evidence produced the highest retrofit-priority signal for both concepts, while jointly efficient evidence produced the lowest. The cases with combined conditions fell between these endpoints and were asymmetric given that the calibrated text-derived and structured contributions differed in magnitude. These results are consistent with Proposition~\ref{prop:fusion_consistency}.


\subsection{Calibration Resolution Sensitivity Analysis}
\label{sec:exp:k_sensitivity}

The parameter \( k \) determines both the resolution of the synthetic calibration space and its computational cost. We evaluate whether dense calibration grids are necessary by measuring the tradeoff between calibration resolution and agreement with a high-resolution reference space. Specifically, we evaluate \( k \in \{2,3,5,7,9,11,13\} \) against a \( k=15 \) reference space. The \(k=15\) space was constructed by defining 15 ordered values for each calibration feature through interpolation between the predefined physical anchor values described in Appendix~\ref{app:calibration_space}. Lower-resolution spaces were constructed as nested subsets of this reference grid by selecting approximately equally spaced ordinal levels. This procedure preserves the endpoints and maintains comparable coverage of the feature ranges while reducing the number of calibration points evaluated. The selected levels and corresponding physical values are provided in Appendix~\ref{app:calibration_space}.

As resolution increases, agreement with the \(k=15\) reference improves, while the number of calibration points grows exponentially, from \(16\) points at \(k=2\) to \(50{,}625\) points at \(k=15\). Despite this growth, the induced rankings stabilize at substantially lower resolutions. A resolution of \(k=7\) recovers a Spearman correlation of \(0.9939\) with the \(k=15\) reference, while \(k=9\) achieves \(0.9987\) correlation and \(96.7\%\) top-\(10\%\) overlap using only \(12.9\%\) of the reference calibration points. Figure~\ref{fig:spearman_cost_tradeoff} illustrates the resulting tradeoff between calibration cost and agreement. Increasing from \(k=9\) to \(k=15\) improves Spearman correlation by only \(0.0013\), while increasing EnergyPlus computation time from approximately \(2.6\) hours to nearly \(20\) hours. Thus, for the Synthetic Homes case study, \(k=9\) provides a near-reference agreement while avoiding the computational expense of the full reference grid. These results suggest that intermediate calibration resolutions may provide a favorable balance between computational cost and agreement with higher-resolution references.

\begin{figure}[t]
    \centering
    \includegraphics[width=0.80\linewidth]{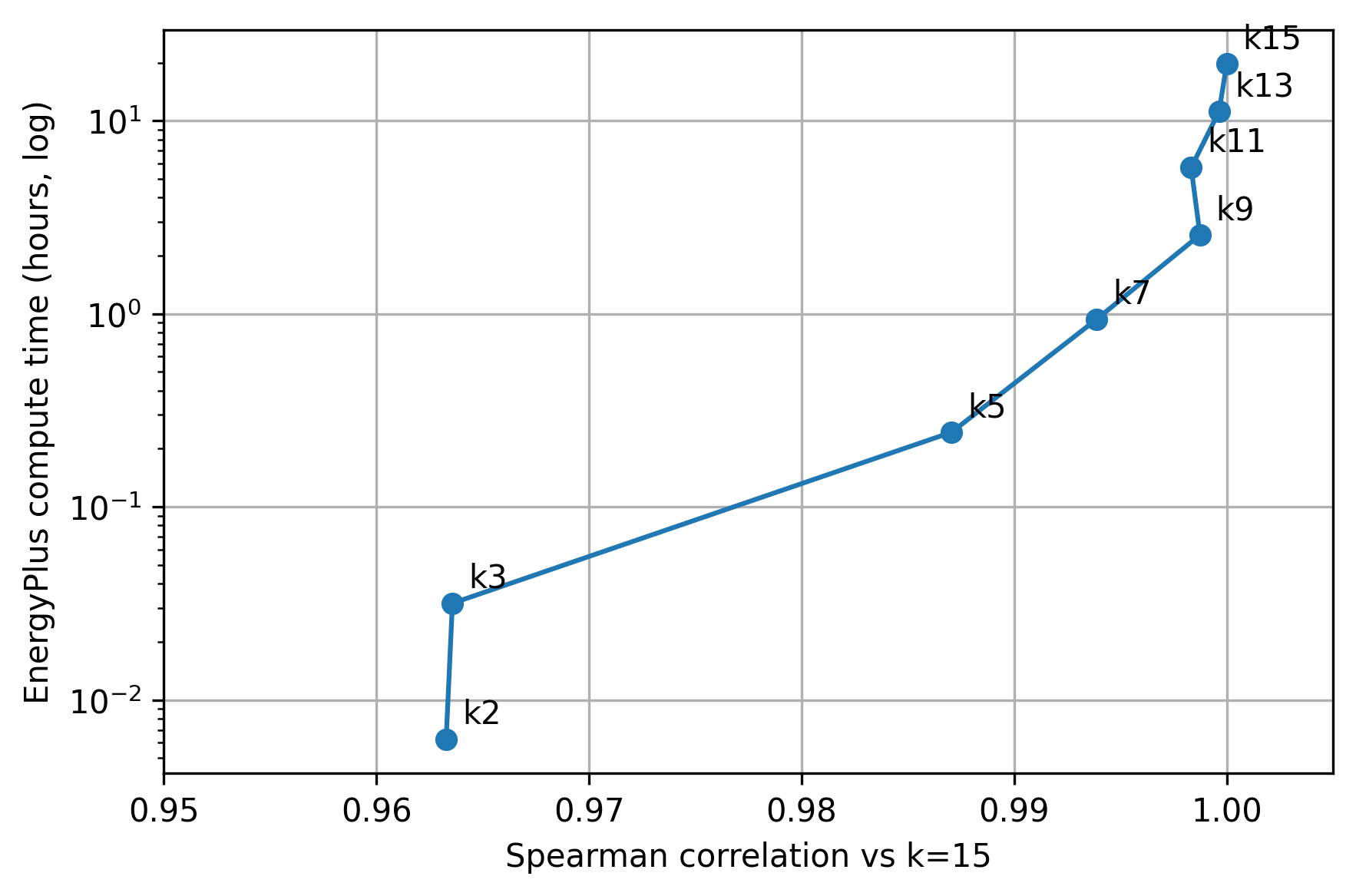}
    \caption{Spearman \( \rho \) against \( k = 15\) vs EnergyPlus compute time for all tested values of \( k \).}
    \label{fig:spearman_cost_tradeoff}
\end{figure}

\section{Conclusion}
\label{sec:conc}

In this work, we showed that heterogeneous evidence sources without ground-truth labels or a shared annotation space can be combined into a common supervision signal using our calibration-first framework.
In a residential energy retrofit case study, the implementation exhibited the predicted monotonic behavior and remained stable across calibration resolutions.
Additionally, we examined the tradeoff between computational cost and reference space resolution, finding that lower-resolution grids closely approximate higher-resolution results while substantially reducing computation.

Across three benchmark datasets, calibrated fusion consistently outperformed uncalibrated averaging on the primary metric, with differences remaining distinguishable from zero after Bonferroni correction, and achieved higher primary-metric point estimates than the best individual scorer.
Relative to empirical \( z \)-scoring, Borda rank averaging, and unsupervised PCA, performance was domain-dependent: calibrated fusion was within 0.02 of all three baselines on Ames Housing, with each difference remaining distinguishable from zero after correction.
On Breast Cancer Wisconsin, one adjusted difference was detected, and on Wine Quality, no adjusted differences were detected.
These findings suggest that our framework performs competitively with empirical \( z \)-scoring, Borda rank averaging, and unsupervised PCA.

Unlike these metrics, our framework estimates scorer-specific calibration statistics from an externally defined synthetic ordinal reference space rather than from the observed sample.
As a consequence, the proposed calibration procedure supports score comparability across cohorts, stability under changes in sample composition, transferability across datasets, and clearer separation between scorer behavior and the distribution of the observed sample.

Our framework, similarly to other weak-supervision approaches, relies on domain knowledge, which is encoded through the selection and ordering of calibration features.
Because these choices determine the structure of the calibration space, future work should examine how sensitive the resulting supervision is the assumptions made during its construction.
Additional directions include incorporating scorer reliability into the fusion procedure and developing principled criteria for selecting calibration resolutions.

Ultimately, supervision without ground truth requires more than combining heterogeneous signals; it requires making them comparable. Calibration-first fusion provides this missing step by aligning scorer outputs to a common scale before fusion.

\subsection*{Acknowledgments}
The authors extend gratitude to Lafayette College for supporting this work by funding it through the EXCEL Scholars program and providing computational resources with the Firebird high performance computing cluster. 
This work also used Jetstream2 at Indiana University through allocation PHY240196 from the Advanced Cyberinfrastructure Coordination Ecosystem: Services \& Support (ACCESS) program~\citep{10.1145/3569951.3597559}, which is supported by U.S. National Science Foundation grants \#2138259, \#2138286, \#2138307, \#2137603, and \#2138296.

\printbibliography

@article{eshbaugh2026synthetichomesmultimodalgenerative,
title = {Synthetic homes: A multimodal generative AI pipeline for residential building data generation under data scarcity},
journal = {Machine Learning with Applications},
volume = {25},
pages = {100959},
year = {2026},
issn = {2666-8270},
doi = {https://doi.org/10.1016/j.mlwa.2026.100959},
author = {Jackson Eshbaugh and Chetan Tiwari and Jorge Silveyra},
}

@article{Ratner2017,
  title = {Snorkel: rapid training data creation with weak supervision},
  volume = {11},
  ISSN = {2150-8097},
  url = {http://dx.doi.org/10.14778/3157794.3157797},
  DOI = {10.14778/3157794.3157797},
  number = {3},
  journal = {Proceedings of the VLDB Endowment},
  publisher = {Association for Computing Machinery (ACM)},
  author = {Ratner,  Alexander and Bach,  Stephen H. and Ehrenberg,  Henry and Fries,  Jason and Wu,  Sen and Ré,  Christopher},
  year = {2017},
  month = nov,
  pages = {269–282}
}

@article{Baltrusaitis2019,
  title = {Multimodal Machine Learning: A Survey and Taxonomy},
  volume = {41},
  ISSN = {1939-3539},
  url = {http://dx.doi.org/10.1109/TPAMI.2018.2798607},
  DOI = {10.1109/tpami.2018.2798607},
  number = {2},
  journal = {IEEE Transactions on Pattern Analysis and Machine Intelligence},
  publisher = {Institute of Electrical and Electronics Engineers (IEEE)},
  author = {Baltrusaitis,  Tadas and Ahuja,  Chaitanya and Morency,  Louis-Philippe},
  year = {2019},
  month = feb,
  pages = {423–443}
}

@article{Liang2024,
  title = {Foundations \& Trends in Multimodal Machine Learning: Principles,  Challenges,  and Open Questions},
  volume = {56},
  ISSN = {1557-7341},
  url = {http://dx.doi.org/10.1145/3656580},
  DOI = {10.1145/3656580},
  number = {10},
  journal = {ACM Computing Surveys},
  publisher = {Association for Computing Machinery (ACM)},
  author = {Liang,  Paul Pu and Zadeh,  Amir and Morency,  Louis-Philippe},
  year = {2024},
  month = jun,
  pages = {1–42}
}

@inproceedings{ngiam2011multimodal,
  title     = {Multimodal Deep Learning},
  author    = {Ngiam, Jiquan and Khosla, Aditya and Kim, Mingyu and Nam, Juhan and Lee, Honglak and Ng, Andrew Y.},
  booktitle = {Proceedings of the 28th International Conference on Machine Learning (ICML)},
  pages     = {689--696},
  year      = {2011},
  address   = {Bellevue, Washington, USA},
  publisher = {Omnipress}
}

@inproceedings{boecking2021interactive,
  title = {Interactive Weak Supervision: {{Learning}} Useful Heuristics for Data Labeling},
  booktitle = {International Conference on Learning Representations},
  author = {Boecking, Benedikt and Neiswanger, Willie and Xing, Eric and Dubrawski, Artur},
  year = 2021
}

@inproceedings{cachay2021endtoend,
  title = {End-to-End Weak Supervision},
  booktitle = {Advances in {{Neural Information Processing Systems}}},
  author = {Cachay, Salva R{\"u}hling and Boecking, Benedikt and Dubrawski, Artur},
  editor = {Beygelzimer, A. and Dauphin, Y. and Liang, P. and Vaughan, J. Wortman},
  year = 2021
}

@inproceedings{pmlr-v151-yu22c,
  title = {Learning from Multiple Noisy Partial Labelers},
  booktitle = {Proceedings of the 25th International Conference on Artificial Intelligence and Statistics},
  author = {Yu, Peilin and Ding, Tiffany and Bach, Stephen H.},
  editor = {{Camps-Valls}, Gustau and Ruiz, Francisco J. R. and Valera, Isabel},
  year = 2022,
  month = mar,
  series = {Proceedings of Machine Learning Research},
  volume = {151},
  pages = {11072--11095},
  publisher = {PMLR}
}

@misc{zhangSurveyProgrammaticWeak2022,
  title = {A {{Survey}} on {{Programmatic Weak Supervision}}},
  author = {Zhang, Jieyu and Hsieh, Cheng-Yu and Yu, Yue and Zhang, Chao and Ratner, Alexander},
  year = 2022,
  month = feb,
  number = {arXiv:2202.05433},
  eprint = {2202.05433},
  primaryclass = {cs.LG},
  publisher = {arXiv},
  doi = {10.48550/arXiv.2202.05433},
  urldate = {2026-05-27},
  archiveprefix = {arXiv}
}

@article{clarkConstructingValidityNew2019,
  title = {Constructing Validity: {{New}} Developments in Creating Objective Measuring Instruments.},
  shorttitle = {Constructing Validity},
  author = {Clark, Lee Anna and Watson, David},
  year = 2019,
  month = dec,
  journal = {Psychological Assessment},
  volume = {31},
  number = {12},
  pages = {1412--1427},
  issn = {1939-134X, 1040-3590},
  doi = {10.1037/pas0000626},
  urldate = {2026-06-15},
  copyright = {http://www.apa.org/pubs/journals/resources/open-access.aspx},
  langid = {english}
}

@article{cronbachConstructValidityPsychological1955,
  title = {Construct Validity in Psychological Tests.},
  author = {Cronbach, Lee J. and Meehl, Paul E.},
  year = 1955,
  month = jul,
  journal = {Psychological Bulletin},
  volume = {52},
  number = {4},
  pages = {281--302},
  issn = {1939-1455, 0033-2909},
  doi = {10.1037/h0040957},
  urldate = {2026-06-15},
  langid = {english}
}

@incollection{furrPsychometrics2008,
  title = {Item {{Response Theory}} and {{Rasch Models}}},
  booktitle = {Psychometrics: {{An Introduction}}},
  author = {Furr, R. Michael and Bacharach, Verne R.},
  year = 2008,
  pages = {314--334},
  publisher = {Sage},
  address = {Thousand Oaks, California},
  urldate = {2026-06-15},
  isbn = {978-1-4129-2760-4}
}

@book{raschProbabilisticModelsIntelligence1980,
  title = {Probabilistic Models for Some Intelligence and Attainment Tests},
  author = {Rasch, G.},
  year = 1980,
  edition = {Expanded ed},
  publisher = {University of Chicago Press},
  address = {Chicago},
  isbn = {978-0-226-70553-8 978-0-226-70554-5},
  lccn = {BF431 .R338 1980}
}

@inproceedings{ICLR2025_e91fb65c,
 author = {Agrawal, Vishwajeet and Pukdee, Rattana and Balcan, Nina and Ravikumar, Pradeep K},
 booktitle = {International Conference on Learning Representations},
 editor = {Y. Yue and A. Garg and N. Peng and F. Sha and R. Yu},
 pages = {93153--93193},
 title = {Learning from weak labelers as constraints},
 url = {https://proceedings.iclr.cc/paper_files/paper/2025/file/e91fb65c6324a984ea9ef39a5b84af04-Paper-Conference.pdf},
 volume = {2025},
 year = {2025}
}

@misc{bacellarCalibratedFusionHeterogeneous2026,
  author = {Bacellar, Andre},
  title = {Calibrated Fusion for Heterogeneous Graph-Vector Retrieval in Multi-Hop QA},
  year = {2026},
  howpublished = {arXiv preprint arXiv:2603.28886},
  doi = {10.48550/arXiv.2603.28886}
}

@article{yazbeckLatentSupervisionMethod2026,
  title = {Latent {{Supervision}}: {{A Method}} for {{Improved Performance}} and {{Calibration}} of {{Machine Learning Classification Models}} in {{Ophthalmology}}},
  shorttitle = {Latent {{Supervision}}},
  author = {Yazbeck, Hady and Assaf, Jad and Lietman, Tom M. and Keenan, Jeremy D. and Jackson, John and {Kalpathy-Cramer}, Jayashree and Campbell, J. Peter and Song, Xubo and Redd, Travis K.},
  year = 2026,
  month = may,
  journal = {Ophthalmology Science},
  volume = {6},
  number = {5},
  pages = {101151},
  issn = {26669145},
  doi = {10.1016/j.xops.2026.101151},
  urldate = {2026-06-15},
  langid = {english}
}

@book{embretsonItemResponseTheory2025,
  title = {Item {{Response Theory}}: {{Foundations}} for {{Psychologists}} and {{Social Scientists}}},
  shorttitle = {Item {{Response Theory}}},
  author = {Embretson, Susan E. and Reise, Steven P.},
  year = 2025,
  month = oct,
  edition = {2},
  publisher = {Routledge},
  address = {New York},
  doi = {10.4324/9781315726557},
  urldate = {2026-06-15},
  isbn = {978-1-315-72655-7},
  langid = {english}
}

@misc{breast_cancer_wisconsin_diagnostic_17,
  author = {Wolberg, William and Mangasarian, Olvi and Street, Nick and Street, W.},
  title = {Breast Cancer Wisconsin (Diagnostic)},
  year = {1993},
  howpublished = {UCI Machine Learning Repository},
  note = {DOI: \url{https://doi.org/10.24432/C5DW2B}}
}

@article{decockAmesIowaAlternative2011,
  title = {Ames, {{Iowa}}: {{Alternative}} to the {{Boston Housing Data}} as an {{End}} of {{Semester Regression Project}}},
  shorttitle = {Ames, {{Iowa}}},
  author = {De Cock, Dean},
  year = 2011,
  month = nov,
  journal = {Journal of Statistics Education},
  volume = {19},
  number = {3},
  pages = {8},
  issn = {1069-1898},
  doi = {10.1080/10691898.2011.11889627},
  urldate = {2026-06-15},
  langid = {english}
}

@misc{OpenPsychometrics,
  author = {{OpenPsychometrics}},
  title = {OpenPsychometrics},
  year = {2019},
  howpublished = {\url{https://openpsychometrics.org}},
  note = {Accessed 2026-06-15}
}

@article{goldbergDevelopmentMarkersBigFive1992,
  title = {The Development of Markers for the {{Big-Five}} Factor Structure.},
  author = {Goldberg, Lewis R.},
  year = 1992,
  month = mar,
  journal = {Psychological Assessment},
  volume = {4},
  number = {1},
  pages = {26--42},
  issn = {1939-134X, 1040-3590},
  doi = {10.1037/1040-3590.4.1.26},
  urldate = {2026-06-15},
  langid = {english}
}

@article{leePsychometricPropertiesHEXACO2004,
  title = {Psychometric {{Properties}} of the {{HEXACO Personality Inventory}}},
  author = {Lee, Kibeom and Ashton, Michael C.},
  year = 2004,
  month = apr,
  journal = {Multivariate Behavioral Research},
  volume = {39},
  number = {2},
  pages = {329--358},
  issn = {0027-3171, 1532-7906},
  doi = {10.1207/s15327906mbr3902_8},
  urldate = {2026-06-15},
  langid = {english}
}

@article{KirkegaardJensen2025MGKT,
  author = {Kirkegaard, Emil O. W. and Jensen, Sebastian},
  title = {Psychometric Analysis of the Multifactor General Knowledge Test},
  journal = {OpenPsych},
  year = {2025},
  doi = {10.26775/OP.2025.07.28}
}

@misc{OpenPsychometricsMGKT,
  author = {{OpenPsychometrics}},
  title = {Multifactor General Knowledge Test (MGKT)},
  year = {2019},
  howpublished = {\url{https://openpsychometrics.org/tests/MGKT/}},
  note = {Accessed 2026-06-15}
}

@article{cortez2009modeling,
  title={Modeling wine preferences by data mining from physicochemical properties},
  author={Cortez, Paulo and Cerdeira, Ant{\'o}nio and Almeida, Fernando and Matos, Telmo and Reis, Jos{\'e}},
  journal={Decision Support Systems},
  volume={47},
  number={4},
  pages={547--553},
  year={2009},
  publisher={Elsevier}
}

@inproceedings{10.1145/3569951.3597559,
author = {Boerner, Timothy J. and Deems, Stephen and Furlani, Thomas R. and Knuth, Shelley L. and Towns, John},
title = {ACCESS: Advancing Innovation: NSF’s Advanced Cyberinfrastructure Coordination Ecosystem: Services \& Support},
year = {2023},
isbn = {9781450399852},
publisher = {Association for Computing Machinery},
address = {New York, NY, USA},
url = {https://doi.org/10.1145/3569951.3597559},
doi = {10.1145/3569951.3597559},
booktitle = {Practice and Experience in Advanced Research Computing 2023: Computing for the Common Good},
pages = {173–176},
numpages = {4},
location = {Portland, OR, USA},
series = {PEARC '23}
}

\appendix

\section{Propositions \& Proofs}
\label{app:props_proofs}
\begin{proposition}[Ordinal Preservation]
\label{prop:ordinal_preservation}
Let \(h\) denote the latent concept, and let \(s_j\) be a subset-specific scorer such that
\[
h(a)<h(b) \implies s_j(a)<s_j(b)
\]
for any instances \(a,b\). Let \(\hat{s}_j\) denote the standardized scorer obtained using calibration statistics with \(\sigma_j>0\). Then
\[
h(a)<h(b) \implies \hat{s}_j(a)<\hat{s}_j(b).
\]
Thus, calibration preserves any ordinal structure in \(s_j\) that is consistent with the latent concept.
\end{proposition}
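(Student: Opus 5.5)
The plan is to compose the assumed strict monotonicity of \(s_j\) with respect to \(h\) with the strict monotonicity of the standardization map. First I will isolate the map itself. By Equation~\ref{eq:standardized}, \(\hat{s}_j = T_j \circ s_j\), where \(T_j:\mathbb{R}\to\mathbb{R}\) is the affine function
\[
T_j(t)=\frac{t-\mu_j}{\sigma_j}.
\]
Here \(\mu_j\) and \(\sigma_j\) are fixed constants estimated once over the synthetic ordinal reference space \(S\) in Section~\ref{sec:framework:phases:calibration}. The key point is that these statistics do not depend on the instances \(a,b\) being compared. Because of this, the same map \(T_j\) is applied to both \(s_j(a)\) and \(s_j(b)\).

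Next I will show that \(T_j\) is strictly increasing whenever \(\sigma_j>0\). For \(t<t'\),
\[
T_j(t')-T_j(t)=\frac{t'-t}{\sigma_j}>0,
\]
since the numerator is positive and \(\sigma_j>0\). Equivalently, \(T_j\) has positive slope \(1/\sigma_j\), as remarked in Section~\ref{sec:framework:phases:fusion}.

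Then I will chain the two implications. Suppose \(h(a)<h(b)\). The hypothesis on \(s_j\) gives \(s_j(a)<s_j(b)\). Applying strict monotonicity of \(T_j\) gives \(\hat{s}_j(a)=T_j(s_j(a))<T_j(s_j(b))=\hat{s}_j(b)\), which is the claim. The closing sentence of the proposition follows directly, because strict increase of \(T_j\) gives \(s_j(a)<s_j(b)\iff\hat{s}_j(a)<\hat{s}_j(b)\), and it preserves ties as well. Hence any order relation that \(s_j\) induces, in particular one consistent with \(h\), is unchanged.

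There is no real obstacle. The only step needing care is making explicit that \(\mu_j\) and \(\sigma_j\) are instance-independent constants fixed at calibration, so that \(T_j\) really is a single deterministic map. I also need the hypothesis \(\sigma_j>0\) to hold. This is guaranteed by the framework's rule, stated in Section~\ref{sec:framework:phases:fusion}, that scorers with zero variance over the calibration space are excluded from fusion. With \(\sigma_j=0\), \(T_j\) would be undefined, and the proposition correctly excludes that case.
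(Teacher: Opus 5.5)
Your proposal is correct and takes the same approach as the paper: both observe that standardization is an affine map with positive slope \(1/\sigma_j\), and then chain it with the assumed implication \(h(a)<h(b)\implies s_j(a)<s_j(b)\). Your explicit note that \(\mu_j\) and \(\sigma_j\) are fixed constants that do not depend on the instances \(a,b\) is a small clarification that the paper leaves implicit.
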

\begin{proof}
By assumption, \(h(a)<h(b)\) implies \(s_j(a)<s_j(b)\). The calibrated score is
\[
\hat{s}_j(x)=\frac{s_j(x)-\mu_j}{\sigma_j}.
\]
Since \(\sigma_j>0\), this transformation is affine with positive slope \(1/\sigma_j\). Therefore
\[
s_j(a)<s_j(b)
\implies
\frac{s_j(a)-\mu_j}{\sigma_j}
<
\frac{s_j(b)-\mu_j}{\sigma_j},
\]
so \(\hat{s}_j(a)<\hat{s}_j(b)\). Hence \(h(a)<h(b)\implies \hat{s}_j(a)<\hat{s}_j(b)\).
\end{proof}

This result means that calibration neither creates nor destroys ordinal information. If subset-specific scorers are monotone observations of the latent concept, then calibrated scores preserve the latent ordering while expressing each score relative to the same reference distribution.

\begin{proposition}[Fusion Consistency]
\label{prop:fusion_consistency}

Let

\[
\kappa(x)=\frac1m\sum_{j=1}^{m}\hat s_j(x).
\]

For any two instances \(a\) and \(b\), define \( \Delta_j=\hat s_j(b)-\hat s_j(a). \) Then \( \kappa(b)>\kappa(a) \) if and only if

\[
\sum_{j:\Delta_j>0}\Delta_j
>
\sum_{j:\Delta_j<0}|\Delta_j|.
\]

Thus, the fused supervision score ranks instances according to aggregate calibrated support across subset-specific scorers.

\end{proposition}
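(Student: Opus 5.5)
The plan is to reduce the claim to a statement about the sign of a single sum and then split that sum by the sign of its terms. By Equation~\ref{eq:overall_lab} and linearity of the arithmetic mean,
\[
\kappa(b)-\kappa(a)=\frac{1}{m}\sum_{j=1}^{m}\bigl(\hat s_j(b)-\hat s_j(a)\bigr)=\frac{1}{m}\sum_{j=1}^{m}\Delta_j .
\]
The number of scorers \(m\) is fixed and positive. Scorers with \(\sigma_j=0\) were excluded before fusion, so every \(\hat s_j\) is well defined and \(m\ge 1\). Multiplying by \(1/m>0\) preserves strict inequalities, so \(\kappa(b)>\kappa(a)\) holds if and only if \(\sum_{j=1}^{m}\Delta_j>0\).

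Next, I will partition the index set \(\{1,\dots,m\}\) into three disjoint classes: \(P=\{j:\Delta_j>0\}\), \(N=\{j:\Delta_j<0\}\), and \(Z=\{j:\Delta_j=0\}\). The terms indexed by \(Z\) contribute nothing. For \(j\in N\) we have \(\Delta_j=-|\Delta_j|\). This gives
\[
\sum_{j=1}^{m}\Delta_j=\sum_{j\in P}\Delta_j-\sum_{j\in N}|\Delta_j| .
\]
Hence \(\sum_j\Delta_j>0\) is equivalent to \(\sum_{j\in P}\Delta_j>\sum_{j\in N}|\Delta_j|\), which is exactly the stated condition. Both directions of the ``if and only if'' follow because each step is an equivalence. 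The convention that an empty sum equals zero covers the degenerate cases where \(P\) or \(N\) is empty.

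I do not expect a genuine obstacle. The only points needing care are bookkeeping: justifying that \(1/m\) is strictly positive, treating the zero-difference scorers correctly, and allowing empty index sets. Finally, I will note that Proposition~\ref{prop:ordinal_preservation} makes each \(\Delta_j\) have the same sign as the raw difference \(s_j(b)-s_j(a)\). This links the interpretation ``aggregate calibrated support'' to the raw scorer orderings, although the equivalence itself does not depend on that fact.
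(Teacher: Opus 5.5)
Your proof is correct and matches the paper's argument essentially step for step: you reduce $\kappa(b)-\kappa(a)$ to $\frac{1}{m}\sum_{j}\Delta_j$, use $m>0$, and split the sum into its positive and negative parts. Your explicit handling of the zero-difference indices and of empty sums is bookkeeping that the paper leaves implicit. Your closing remark about sign agreement with the raw differences follows from the positive-slope affine map used in the proof of Proposition~\ref{prop:ordinal_preservation}, not from that proposition's statement, but as you say, the equivalence does not depend on it.
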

\begin{proof}
Let \( \Delta_j = \hat{s}_j(b) - \hat{s}_j(a) \) represent the calibrated difference for subset-specific scorer \( j \). Let \( A = \{ j \mid \Delta_j > 0 \} \) be the set of scorers that favor \( b \), and let \( B = \{ j \mid \Delta_j < 0 \} \) be the set of scorers that favor \( a \).

Define

\[
\Delta_\kappa
=
\kappa(b)-\kappa(a)
=
\frac{\sum_{j=1}^{m}\Delta_j}{m}.
\]

Since \(m>0\),

\[
\Delta_\kappa>0
\iff
\sum_{j=1}^{m}\Delta_j>0.
\]

Partitioning the sum into positive and negative contributions yields

\[
\sum_{j=1}^{m}\Delta_j
=
\sum_{j\in A}\Delta_j
-
\sum_{j\in B}|\Delta_j|.
\]

Therefore,

\[
\Delta_\kappa>0
\iff
\sum_{j\in A}\Delta_j
>
\sum_{j\in B}|\Delta_j|.
\]

Hence fusion ranks \(b>a\) if and only if the aggregate calibrated support for \(b\) exceeds the aggregate calibrated support for \(a\).
\end{proof}

\section{Calibration Space Construction \& Resolution Analysis}
\label{app:calibration_space}
This appendix provides additional details regarding the construction of the synthetic calibration space and the selection of calibration resolution. The calibration space defines the shared reference used to align subset-specific scorers prior to fusion. Because the number of calibration points grows exponentially with the number of calibration features and the number of ordinal levels selected for each feature, we also evaluate the computational tradeoff associated with increasing calibration resolution.

\subsection{Selection of Calibration Levels}
\label{app:calibration_levels}

Each calibration feature is represented using an ordinal scale containing \(k\) ordered levels. Calibration levels are selected to span the feasible range of each feature while preserving the ordinal relationships between levels. For the Synthetic Homes case study, the calibration features are HVAC heating COP, HVAC cooling COP, roof R-value, and wall R-value. Each feature range is discretized independently, with lower values representing less favorable conditions and higher values representing more favorable conditions.

For the main experiments, calibration levels are selected as approximately evenly spaced points across the feasible range of each calibration feature. This procedure provides a uniform ordinal reference space while avoiding arbitrary concentration of calibration points in specific regions of the feature space.

For the resolution sensitivity analysis, the highest-resolution calibration space (\(k=15\)) serves as the reference space. Lower-resolution calibration spaces are constructed as nested subsets of this reference grid by selecting approximately equally spaced ordinal levels. This preserves the endpoints and maintains similar coverage of the underlying feature ranges while reducing the number of calibration points evaluated.

Table~\ref{tab:calibration_level_selection} reports the selected ordinal levels for each evaluated value of \(k\). The corresponding physical parameter values used to construct the calibration space are provided in Table~\ref{tab:calibration_parameter_values}.

\begin{table*}[t]
\centering
\begin{tabular}{cc}
\toprule
Resolution & Selected levels \\
\midrule
$k=2$ & 1, 15 \\
$k=3$ & 1, 8, 15 \\
$k=5$ & 1, 5, 8, 11, 15 \\
$k=7$ & 1, 3, 6, 8, 10, 13, 15 \\
$k=9$ & 1, 3, 5, 6, 8, 10, 11, 13, 15 \\
$k=11$ & 1, 2, 4, 5, 7, 8, 9, 11, 12, 14, 15 \\
$k=13$ & 1, 2, 3, 5, 6, 7, 8, 9, 10, 11, 13, 14, 15 \\
$k=15$ & 1, 2, 3, 4, 5, 6, 7, 8, 9, 10, 11, 12, 13, 14, 15 \\

\bottomrule
\end{tabular}
\caption{Selected ordinal levels from the $k=15$ master calibration grid for each calibration resolution.}
\label{tab:calibration_level_selection}
\end{table*}

\begin{table*}[t]
\centering
\begin{tabular}{ccccc}
\toprule
Level & Wall $R$ & Roof $R$ & Heating COP & Cooling COP \\
\midrule
1 & 4 & 10 & 0.7 & 1 \\
2 & 5 & 13.333 & 0.7333 & 1.3333 \\
3 & 6 & 16.667 & 0.7667 & 1.6667 \\
4 & 7 & 20 & 0.8 & 2 \\
5 & 8.5 & 22.5 & 0.825 & 2.25 \\
6 & 10 & 25 & 0.85 & 2.5 \\
7 & 11.5 & 27.5 & 0.875 & 2.75 \\
8 & 13 & 30 & 0.9 & 3 \\
9 & 14.75 & 32.5 & 0.9125 & 3.125 \\
10 & 16.5 & 35 & 0.925 & 3.25 \\
11 & 18.25 & 37.5 & 0.9375 & 3.375 \\
12 & 20 & 40 & 0.95 & 3.5 \\
13 & 23.333 & 43.333 & 0.9667 & 3.6667 \\
14 & 26.667 & 46.667 & 0.9833 & 3.8333 \\
15 & 30 & 50 & 1 & 4 \\

\bottomrule
\end{tabular}
\caption{Physical values associated with the $k=15$ calibration levels.}
\label{tab:calibration_parameter_values}
\end{table*}

\subsection{Calibration Resolution and Computational Cost}
\label{app:calibration_cost}

The number of calibration points increases exponentially with calibration resolution. Because the calibration space varies four features independently, the total number of calibration points is \( |S| = k^4, \) where \(k\) denotes the number of ordinal levels assigned to each calibration feature. Consequently, increasing calibration resolution provides finer coverage of the reference space but requires additional scorer evaluations and, in the Synthetic Homes case study, additional EnergyPlus simulations.

Figure~\ref{fig:calibration_cost_scaling} illustrates the growth in calibration points as a function of \(k\). Although increasing resolution improves agreement with the highest-resolution reference space, the computational cost grows rapidly.

\begin{figure}[t]
    \centering
    \includegraphics[width=0.6\linewidth]{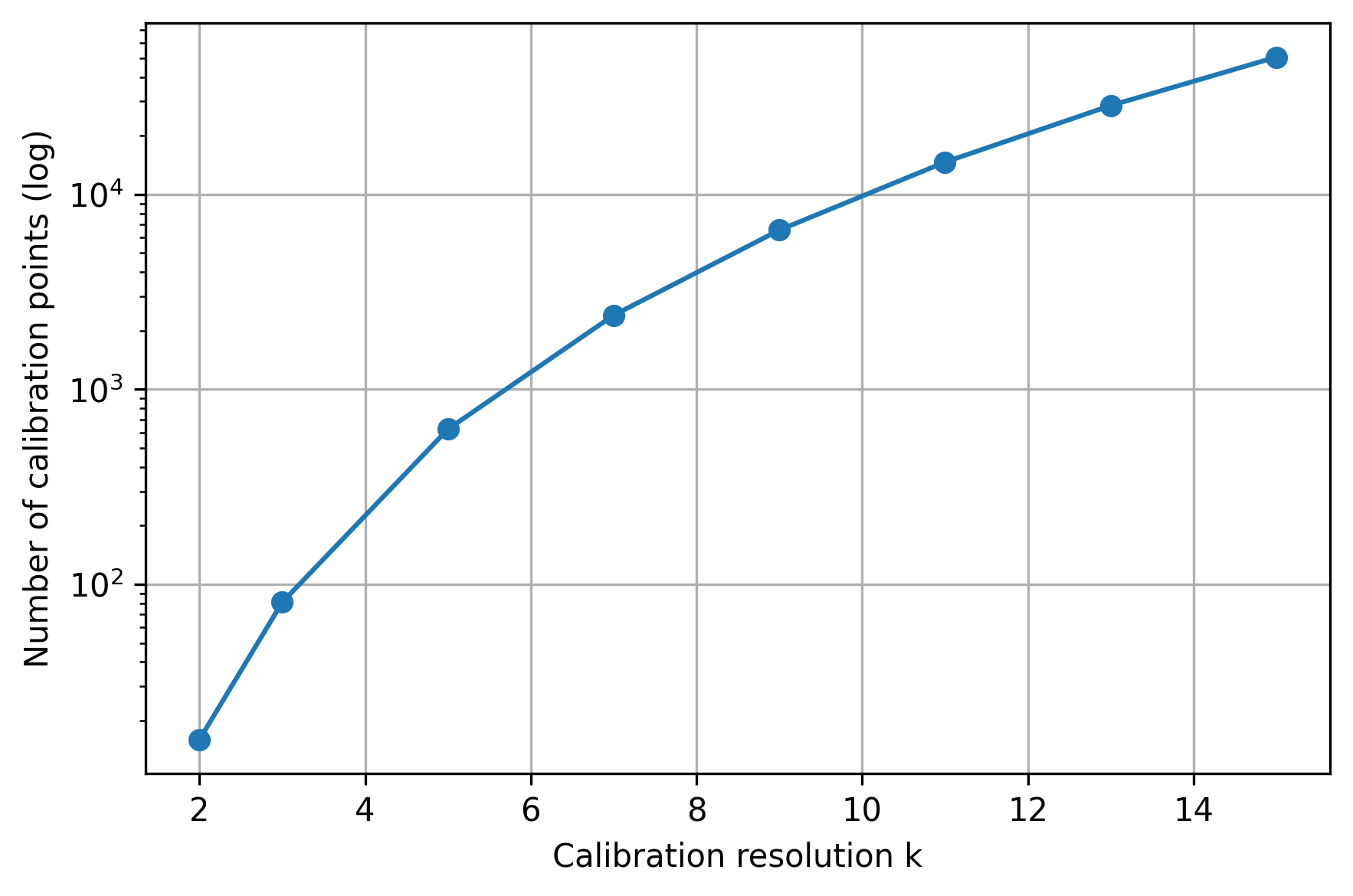}
    \caption{Growth in the number of calibration points with calibration resolution \(k\). The calibration space grows as \(k^4\) because four calibration features are varied independently.}
    \label{fig:calibration_cost_scaling}
\end{figure}

\subsection{Resolution Agreement and Diminishing Returns}
\label{app:resolution_agreement}

To evaluate the effect of calibration resolution on the resulting supervision signal, each lower-resolution calibration space is compared against the \(k=15\) reference space. Agreement is evaluated using rank correlation and differences between the resulting supervision scores.

Table~\ref{tab:k_sensitivity} reports the full results across the evaluated calibration resolutions, including calibration-space size, computation time, Spearman correlation, and top-10\% overlap with the \(k=15\) reference.

\begin{table*}[t]
\centering
\begin{tabular}{lcccc}
\toprule
\textbf{Resolution}
& \textbf{Points}
& \textbf{Runtime}
& \textbf{Spearman \(\rho\)}
& \textbf{Top-10\% Overlap} \\
\midrule
\(k=5\) & 625 & 0.24h & 0.9871 & 88.56 \% \\
\(k=7\) & 2401 & 0.94h & 0.9939 & 92.42 \% \\
\(k=9\) & 6561 & 2.56h & 0.9987 & 96.66 \% \\
\(k=15\) & 50625 & 19.76h & 1.0000 & 100.00 \% \\
\bottomrule
\end{tabular}
\caption{Sensitivity results from selected \( k \)-values.}
\label{tab:k_sensitivity}
\end{table*}

Figure~\ref{fig:spearman_resolution} shows the relationship between calibration resolution and Spearman correlation with the \(k=15\) reference. Agreement improves rapidly at lower resolutions before approaching saturation at higher resolutions.

\begin{figure}[t]
    \centering
    \includegraphics[width=0.6\linewidth]{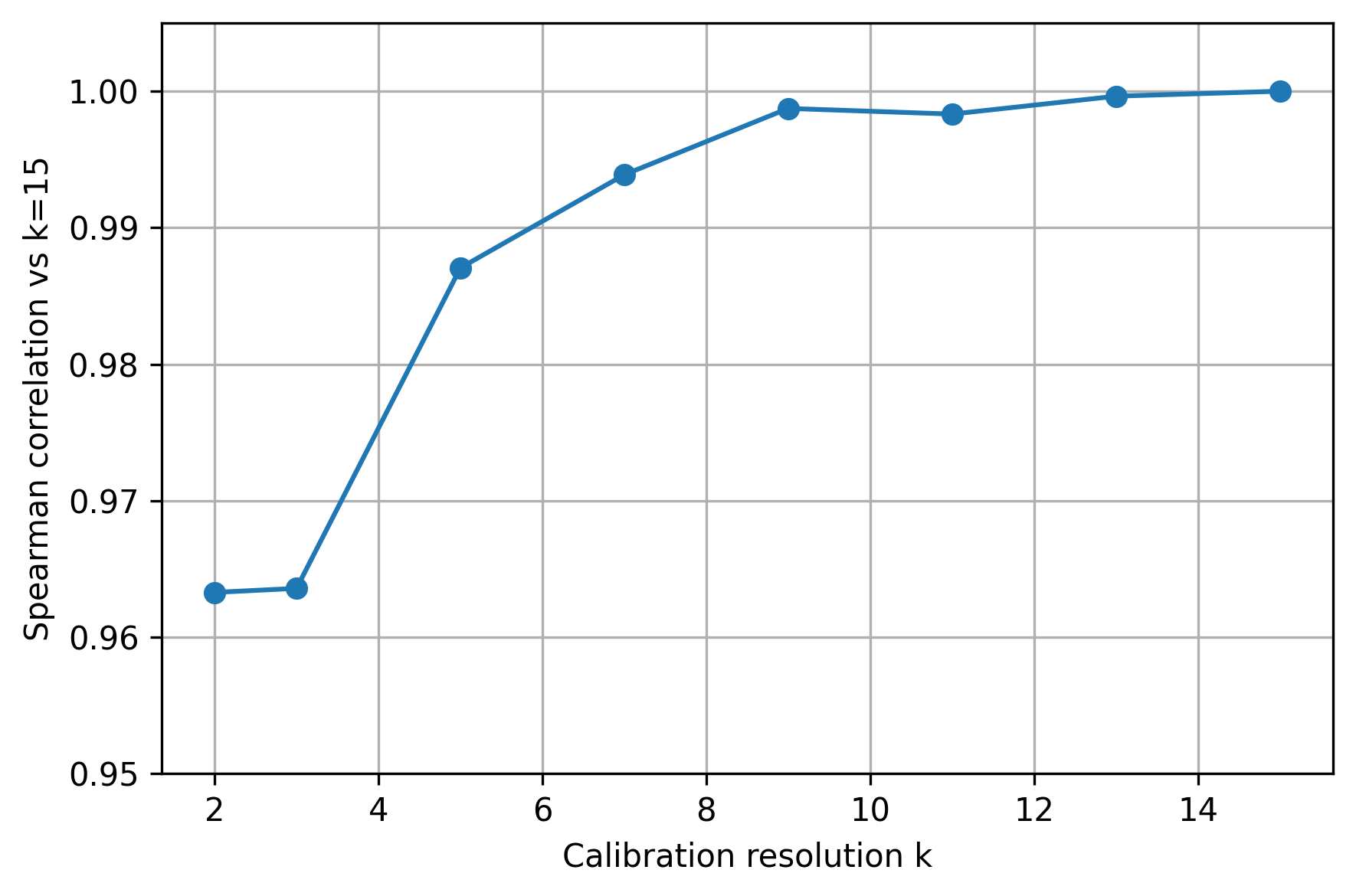}
    \caption{Spearman correlation between supervision scores generated at each calibration resolution and the \(k=15\) reference space.}
    \label{fig:spearman_resolution}
\end{figure}

While increasing \(k\) improves agreement, these improvements require increasingly larger calibration spaces. Figure~\ref{fig:spearman_cost_tradeoff} illustrates the tradeoff between agreement with the reference space and EnergyPlus computation time. Increasing the resolution beyond moderate values provides only marginal improvements in rank preservation while requiring substantially greater computational expense.

\subsection{Implications for Calibration Resolution Selection}
\label{app:resolution_selection}

The results demonstrate that increasing calibration resolution improves agreement with the highest-resolution reference space, but the marginal benefit decreases as the number of calibration points increases. Therefore, the appropriate value of \(k\) depends on the computational resources available and the precision required for a given application.

Future work may investigate automated approaches for selecting calibration resolution, analogous to elbow-based methods used in clustering. Such approaches could identify a point at which additional calibration points provide insufficient improvement relative to their computational cost.

\section{Cross-Domain Validation Details}
\label{app:cross_domain}
This appendix provides the scorer definitions and synthetic reference spaces
used for the cross-domain validation experiments in
Section~\ref{sec:exp:cross_domain}. In all three datasets, the target variable
was removed before supervision construction and used only for post hoc
evaluation. The feature groups were selected a priori based on semantic
correspondence among the dataset variables and their expected monotonic
relationship with the latent concept. Calibration endpoints were manually
specified to span low-to-high values for each feature. Neither the feature
groupings nor the endpoint values were selected using the held-out target
labels.

\subsection{Ames Housing}

The latent concept is residential property value. Four heuristic scorers
represent complementary aspects of the concept:
\begin{align}
s_{\mathrm{scale}}(x)
&=
\frac{
x_{\mathrm{GrLivArea}}
+
x_{\mathrm{TotalBsmtSF}}
}{2},
\\
s_{\mathrm{quality}}(x)
&=
\frac{
x_{\mathrm{OverallQual}}
+
x_{\mathrm{OverallCond}}
}{2},
\\
s_{\mathrm{modernization}}(x)
&=
\frac{
x_{\mathrm{YearBuilt}}
+
x_{\mathrm{YearRemodAdd}}
}{2},
\\
s_{\mathrm{garage}}(x)
&=
\frac{
x_{\mathrm{GarageCars}}
+
x_{\mathrm{GarageArea}}
}{2}.
\end{align}

Larger values of each scorer indicate greater expected property value. The
four ordered calibration dimensions and their endpoint values are shown in
Table~\ref{tab:ames_calibration}. Within each dimension, all associated
features are varied jointly across five linearly spaced levels. Their Cartesian
product produces \(5^4 = 625\) synthetic calibration points.

\begin{table*}[t]
\centering
\begin{tabular}{lll}
\toprule
\textbf{Calibration dimension} & \textbf{Feature} & \textbf{Level 1--Level 5} \\
\midrule
Physical scale
& GrLivArea & 600--3500 \\
& TotalBsmtSF & 300--1800 \\
\midrule
Quality and condition
& OverallQual & 2--10 \\
& OverallCond & 3--9 \\
\midrule
Age and modernization
& YearBuilt & 1920--2010 \\
& YearRemodAdd & 1950--2010 \\
\midrule
Garage capacity
& GarageCars & 0--4 \\
& GarageArea & 0--1000 \\
\bottomrule
\end{tabular}
\caption{Calibration dimensions and endpoint values used for Ames Housing.}
\label{tab:ames_calibration}
\end{table*}

The \texttt{SalePrice} target is removed before scorer construction. Missing
numerical values are replaced using feature medians. Although categorical
features are ordinally encoded during preprocessing, the four scorers use only
the numerical features listed above.

\subsection{Breast Cancer Wisconsin}

The latent concept is malignancy risk. The original dataset target is
reoriented so that malignant cases are assigned one and benign cases are
assigned zero. This target is withheld during supervision construction.

Four heuristic scorers are defined as subset-specific means:
\begin{equation}
\label{eq:bcw_scorers}
s_g(x)
=
\frac{1}{|\mathcal{F}_g|}
\sum_{i \in \mathcal{F}_g} x_i,
\qquad
g \in
\{
\text{size},
\text{shape},
\text{texture},
\text{worst}
\},
\end{equation}
where each feature set \(\mathcal{F}_g\) comprises the corresponding variables
listed in Table~\ref{tab:bcw_calibration}.

Larger values indicate greater estimated malignancy risk. As in the Ames
experiment, the features associated with each calibration dimension are varied
jointly across five linearly spaced levels, yielding \(5^4 = 625\) synthetic
reference points.

\begin{table*}[t]
\centering
\begin{tabular}{lll}
\toprule
\textbf{Calibration dimension} & \textbf{Feature} & \textbf{Level 1--Level 5} \\
\midrule
Size
& Mean radius & 7--28 \\
& Mean perimeter & 45--190 \\
& Mean area & 150--2500 \\
\midrule
Shape irregularity
& Mean compactness & 0.02--0.35 \\
& Mean concavity & 0--0.45 \\
& Mean concave points & 0--0.20 \\
\midrule
Texture and smoothness
& Mean texture & 9--40 \\
& Mean smoothness & 0.05--0.16 \\
& Mean symmetry & 0.10--0.30 \\
\midrule
Worst-case measurements
& Worst radius & 8--36 \\
& Worst perimeter & 50--250 \\
& Worst area & 200--4300 \\
& Worst concavity & 0--1.25 \\
& Worst concave points & 0--0.30 \\
\bottomrule
\end{tabular}
\caption{Calibration dimensions and endpoint values used for Breast Cancer Wisconsin.}
\label{tab:bcw_calibration}
\end{table*}

\subsection{Wine Quality}
\label{app:wine_quality}

The latent concept is wine quality. Four heuristic scorers are defined using
monotonic physicochemical properties:
\begin{align}
s_{\text{acidity}}(x)
&=
\frac{
-x_{\text{volatile acidity}}
+
x_{\text{citric acid}}
}{2},
\\
s_{\text{additives}}(x)
&=
\frac{
-x_{\text{chlorides}}
+
x_{\text{sulphates}}
}{2},
\\
s_{\text{preservatives}}(x)
&=
\frac{
-x_{\text{free SO}_2}
-
x_{\text{total SO}_2}
}{2},
\\
s_{\text{profile}}(x)
&=
\frac{
x_{\text{alcohol}}
-
x_{\text{density}}
}{2}.
\end{align}

Larger values indicate higher expected quality. Each of the four corresponding
calibration dimensions is varied jointly across five linearly spaced levels,
yielding \(5^4 = 625\) synthetic calibration points. The calibration endpoints
are shown in Table~\ref{tab:wine_calibration}. Target quality labels are
removed prior to supervision construction and used solely for post hoc
evaluation.

\begin{table*}[t]
\centering
\begin{tabular}{lll}
\toprule
\textbf{Calibration dimension} & \textbf{Feature} & \textbf{Level 1--Level 5} \\
\midrule
Acidity
& Volatile acidity & 1.58--0.12 \\
& Citric acid & 0.0--1.0 \\
\midrule
Additives
& Chlorides & 0.611--0.012 \\
& Sulphates & 0.33--2.0 \\
\midrule
Preservatives
& Free sulfur dioxide & 72.0--1.0 \\
& Total sulfur dioxide & 289.0--6.0 \\
\midrule
Profile
& Alcohol & 8.0--14.9 \\
& Density & 1.00369--0.99007 \\
\bottomrule
\end{tabular}
\caption{Calibration dimensions and endpoint values used for Wine Quality.}
\label{tab:wine_calibration}
\end{table*}

\subsection{Calibration, Fusion, and Baselines}

For each scorer \(j\), the reference mean and population standard deviation
are computed over the corresponding 625 synthetic scorer outputs. Observed
scores are standardized using these reference statistics and averaged with
equal weight to obtain \(\kappa\).

The uncalibrated baseline is the direct arithmetic mean of the four raw scorer
outputs. We additionally compare against empirical \(z\)-score averaging,
Borda rank averaging, and the first principal component obtained using
unsupervised PCA. Each raw scorer is also evaluated individually.

Ames Housing and Wine Quality are evaluated primarily using Spearman
correlation, with Kendall correlation and top-10\% overlap reported as
secondary and additional metrics. Breast Cancer Wisconsin is evaluated
primarily using AUROC, with average precision and Spearman correlation with
malignancy status reported as secondary and additional metrics. Primary
results and paired comparisons are reported in
Table~\ref{tab:cross_domain}; the complete secondary and additional results
are reported in Table~\ref{tab:cross_domain_secondary}.

\begin{table*}[t]
\centering
\small
\begin{tabular}{llcc}
\toprule
\textbf{Dataset}
& \textbf{Method}
& \textbf{Secondary}
& \textbf{Additional} \\
\midrule

Ames Housing
& Calibrated Fusion ($\kappa$)
& 0.7197
& 73.97\% \\
& Empirical $z$-Score Avg.
& 0.7444
& \textbf{78.08\%} \\
& Borda Rank Averaging
& 0.7359
& 73.29\% \\
& Unsupervised PCA (PC1)
& \textbf{0.7456}
& \textbf{78.08\%} \\
& Uncalibrated Average
& 0.6890
& 73.29\% \\
& Best Single Scorer
& 0.6365
& 68.49\% \\

\midrule

Breast Cancer
& Calibrated Fusion ($\kappa$)
& 0.9770
& 0.8111 \\
& Empirical $z$-Score Avg.
& 0.9743
& 0.8084 \\
& Borda Rank Averaging
& 0.9774
& 0.8108 \\
& Unsupervised PCA (PC1)
& \textbf{0.9812}
& \textbf{0.8151} \\
& Uncalibrated Average
& 0.9456
& 0.7662 \\
& Best Single Scorer
& 0.9616
& 0.7880 \\

\midrule

Wine Quality
& Calibrated Fusion ($\kappa$)
& \textbf{0.4403}
& 44.38\% \\
& Empirical $z$-Score Avg.
& 0.4336
& 42.50\% \\
& Borda Rank Averaging
& 0.4339
& \textbf{46.25\%} \\
& Unsupervised PCA (PC1)
& 0.4349
& 38.75\% \\
& Uncalibrated Average
& 0.1511
& 20.62\% \\
& Best Single Scorer
& 0.3718
& 34.38\% \\

\bottomrule
\end{tabular}
\caption{
Secondary and additional cross-domain validation metrics.
For Ames Housing and Wine Quality, the secondary metric is Kendall's
\(\tau\) and the additional metric is top-10\% overlap.
For Breast Cancer Wisconsin, the secondary metric is average precision
and the additional metric is Spearman correlation with malignancy status.
Bold values indicate the highest observed value within each dataset and metric.
}
\label{tab:cross_domain_secondary}
\end{table*}

Finally, Table~\ref{tab:scorer_performance} reports the primary-metric
performance of each individual subset-specific scorer.

\begin{table*}[t]
\centering
\begin{tabular}{lllc}
\toprule
\textbf{Dataset}
& \textbf{Subset-Specific Scorer}
& \textbf{Metric}
& \textbf{Value} \\
\midrule
Ames Housing
& Physical scale
& Spearman $\rho$
& 0.8150 \\
& Age and modernization
& & 0.6814 \\
& Garage capacity
& & 0.6496 \\
& Quality and condition
& & 0.6197 \\

\midrule

Breast Cancer
& Worst-case features
& AUROC
& 0.9705 \\
Wisconsin
& Size
& & 0.9391 \\
& Shape irregularity
& & 0.9325 \\
& Texture and smoothness
& & 0.7775 \\

\midrule

Wine Quality
& Profile
& Spearman $\rho$
& 0.4757 \\
& Additives
& & 0.3911 \\
& Acidity
& & 0.3244 \\
& Preservatives
& & 0.1724 \\

\bottomrule
\end{tabular}
\caption{
Primary validation performance of each subset-specific scorer across all
three evaluation datasets.
}
\label{tab:scorer_performance}
\end{table*}

\section{Synthetic Homes Implementation Details}
\label{app:synthetic_homes}
This appendix describes the implementation of the proposed framework for the Synthetic Homes case study presented in Section~\ref{sec:exp:synth}.

\subsection{Calibration Features and Subset-Specific Scorers}

The latent concept is residential retrofit priority. We characterize this concept using four calibration features associated with two building-system categories: HVAC heating COP, HVAC cooling COP, roof R-value, and wall R-value. Increasing HVAC COP and insulation R-value correspond to increasing building efficiency and, consequently, decreasing retrofit need.

The implementation employs four subset-specific scorers associated with two feature subsets. Two text-derived scorers are produced by an LLM that evaluates a single home inspection note and returns separate estimates of HVAC and insulation retrofit need. Two structured scorers are heuristic functions operating on the corresponding numerical feature subsets. The structured HVAC scorer jointly evaluates heating and cooling COP, while the structured insulation scorer jointly evaluates roof and wall R-values. We denote the four raw scorer outputs by

\begin{align*}
    s_{\mathrm{text,HVAC}}, \\
    s_{\mathrm{text,insulation}}, \\
    s_{\mathrm{structured,HVAC}}, \\
    s_{\mathrm{structured,insulation}}.
\end{align*}

All four outputs are oriented so that larger values indicate greater retrofit need.

\subsection{Synthetic Ordinal Reference Space}

Each calibration feature is assigned five ordered efficiency levels, shown in Table~\ref{tab:sec:framework:calibration:bins}. Their Cartesian product yields \( 5^4=625 \) calibration points.

\begin{table}[h]
\centering
\small
\begin{tabular}{c|ccccc}
\toprule
\textbf{Calibration Feature} &
\multicolumn{5}{c}{\textbf{Increasing Efficiency} \(\longrightarrow\)} \\
\midrule
HVAC heating COP & 0.7 & 0.8 & 0.9 & 0.95 & 1.0 \\
HVAC cooling COP & 1.0 & 2.0 & 3.0 & 3.5 & 4.0 \\
Roof R-value     & 10  & 20  & 30  & 40   & 50  \\
Wall R-value     & 4   & 7   & 13  & 20   & 30  \\
\bottomrule
\end{tabular}
\caption{Calibration levels used to construct the Synthetic Homes ordinal reference space.}
\label{tab:sec:framework:calibration:bins}
\end{table}

The calibration values were selected using domain knowledge rather than estimated from the empirical distribution of the Synthetic Homes dataset. Their purpose is to define an interpretable reference space spanning clearly inefficient, intermediate, and highly efficient conditions while preserving the expected monotonic relationship between each feature and retrofit need. They should not be interpreted as building-code thresholds, normative retrofit recommendations, or representative quantiles of the existing housing stock. The values are held fixed across the experiments to provide a common reference distribution for estimating scorer-specific calibration statistics.

\subsection{Whole-Home Inspection Notes}

For each calibration point, the heating and cooling COP levels are collapsed into a single HVAC note level. Similarly, the wall and roof R-value levels are collapsed into a single insulation note level. For two constituent levels \( a, b \in \{ 1, \ldots ,5 \} \), the corresponding note level is

\begin{equation}
\label{eq:collapsed_note_level}
\ell(a,b)
=
\left\lfloor
\frac{a+b}{2}
+
\frac{1}{2}
\right\rfloor,
\end{equation}

which applies half-up rounding to their mean. For example, constituent levels 2 and 3 produce note level 3.

One HVAC component and one insulation component are selected using the resulting levels and concatenated to form a single whole-home inspection note. The text components used in the experiments are shown in Table~\ref{tab:inspection_note_templates}.

\begin{table*}[t]
\centering
\small
\begin{tabular}{c p{0.38\linewidth} p{0.38\linewidth}}
\toprule
\textbf{Level}
& \textbf{HVAC Component}
& \textbf{Insulation Component} \\
\midrule

1 &
The home has an aging central HVAC system showing substantial exterior corrosion and poor operating efficiency. &
Attic insulation is minimal, with exposed joists visible throughout, causing significant heat loss. \\

2 &
The HVAC system appears to be in working condition but is an older standard-efficiency model. &
No signs of added insulation were observed in the basement ceiling, suggesting potential energy inefficiency. \\

3 &
The home has a functioning central HVAC system of moderate age and efficiency, with no major defects observed but no recent high-efficiency upgrades. &
Walls appear to be adequately insulated based on construction year, though no upgrades were observed. \\

4 &
The HVAC system was recently replaced with a standard high-efficiency model and is expected to operate efficiently. &
Blown-in insulation is present in the attic to a depth of approximately 10 inches, providing good thermal resistance. \\

5 &
A state-of-the-art HVAC system with smart thermostats and variable-speed compressors was recently installed, maximizing energy efficiency. &
High-performance spray foam insulation was installed throughout the walls, attic, and basement, providing maximum energy efficiency. \\

\bottomrule
\end{tabular}
\caption{Text components used to construct home inspection notes at each ordinal efficiency level.}
\label{tab:inspection_note_templates}
\end{table*}

The complete note presents the insulation component followed by the HVAC component. The LLM receives this home note in a single request and returns two scores in the interval \( [0,1] \): one for HVAC retrofit need and one for insulation retrofit need. The model is instructed to evaluate the two categories separately using all relevant evidence in the complete note.

Because each text component depends only on a collapsed concept-level condition, multiple points in the \( 625 \)-point structured reference space produce the same textual realization. The five HVAC conditions and five insulation conditions yield \( 5^2=25 \) distinct home notes. LLM outputs are cached by the complete note, so each distinct note is evaluated once and its two outputs are reused for every calibration point that produces the same note. The repeated outputs nevertheless retain their multiplicity in the \( 625 \)-point calibration distribution.

\subsection{Structured Scorers}

For a numerical feature \( q \) with ordered calibration anchors

\[
v_1 < v_2 < \cdots < v_5,
\]

the implementation maps \( q \) onto an efficiency level \( e(q)\in[1,5] \). Values at or below \( v_1 \) are assigned efficiency level 1, and values at or above \( v_5 \) are assigned efficiency level 5. Between adjacent anchors \( v_i \) and \( v_{i+1} \), the efficiency level is obtained by linear interpolation:

\begin{equation}
\label{eq:feature_efficiency_level}
e(q)
=
i+
\frac{q-v_i}{v_{i+1}-v_i},
\qquad
v_i\leq q\leq v_{i+1}.
\end{equation}

The corresponding feature-specific retrofit-need value is

\begin{equation}
\label{eq:feature_retrofit_need}
r(q)=6-e(q),
\end{equation}

so that increasing efficiency produces decreasing retrofit need.

The structured HVAC score is the mean of the heating- and cooling-specific retrofit-need values:

\begin{equation}
\label{eq:structured_hvac}
s_{\mathrm{structured,HVAC}}(x)
=
\frac{
r(q_{\mathrm{heat}}(x))
+
r(q_{\mathrm{cool}}(x))
}{2}.
\end{equation}

The structured insulation score is defined analogously:

\begin{equation}
\label{eq:structured_insulation}
s_{\mathrm{structured,insulation}}(x)
=
\frac{
r(q_{\mathrm{roof}}(x))
+
r(q_{\mathrm{wall}}(x))
}{2}.
\end{equation}

At the five calibration anchors, each feature-specific retrofit-need value is respectively \( 5,4,3,2,1 \). The interpolation permits the same scorers to operate on values that fall between the predefined calibration anchors. No EnergyPlus energy-consumption output or held-out retrofit label is used to construct these structured scores.

\subsection{Text Scorer}

The text scorer uses GPT-4.1-mini with temperature set to zero. It receives the complete home inspection note and is instructed to return a JSON object containing

\[
s_{\mathrm{text,HVAC}}\in[0,1]
\qquad\text{and}\qquad
s_{\mathrm{text,insulation}}\in[0,1].
\]

A score of zero indicates that the corresponding retrofit is judged definitely unnecessary, while a score of one indicates that it is judged definitely necessary. Intermediate values represent partial need or uncertainty. The response is parsed and validated to ensure that both required fields are present and fall within the specified interval.

\subsection{Calibration and Fusion}

Each scorer is evaluated over all \( 625 \) calibration points. For subset-specific scorer \( j \), the implementation computes the mean

\[
\mu_j
=
\frac{1}{625}
\sum_{x\in\mathcal{S}}s_j(x)
\]

and population standard deviation

\[
\sigma_j
=
\sqrt{
\frac{1}{625}
\sum_{x\in\mathcal{S}}
\left(s_j(x)-\mu_j\right)^2
}.
\]

A scorer output for an evaluated instance is standardized as

\begin{equation}
\hat{s}_j(x)
=
\frac{s_j(x)-\mu_j}{\sigma_j}.
\end{equation}

The text-derived and structured outputs are first fused within each building-system category:

\begin{align}
\label{eq:fused_subsets}
\kappa_{\mathrm{HVAC}}(x) &= \tfrac{1}{2}\big(\hat{s}_{\mathrm{text,HVAC}}(x) + \hat{s}_{\mathrm{structured,HVAC}}(x)\big), \\
\kappa_{\mathrm{insul}}(x) &= \tfrac{1}{2}\big(\hat{s}_{\mathrm{text,insul}}(x) + \hat{s}_{\mathrm{structured,insul}}(x)\big).
\end{align}
The overall supervision signal is the equal-weight mean across all four calibrated scorers:
\begin{equation}
\label{eq:fused_synthetic_homes}
\begin{split}
\kappa(x) = \tfrac{1}{4} \big( & \hat{s}_{\mathrm{text,HVAC}}(x) + \hat{s}_{\mathrm{structured,HVAC}}(x) \\
& + \hat{s}_{\mathrm{text,insul}}(x) + \hat{s}_{\mathrm{structured,insul}}(x) \big),
\end{split}
\end{equation}
which equivalently evaluates to $\kappa(x) = \frac{1}{2}\big(\kappa_{\mathrm{HVAC}}(x) + \kappa_{\mathrm{insul}}(x)\big)$.

Larger values of \(\kappa_{\mathrm{HVAC}}\), \(\kappa_{\mathrm{insulation}}\), and \(\kappa\) indicate greater estimated retrofit need. Because these quantities are formed from standardized outputs, negative values indicate retrofit need below the relevant calibration-space mean rather than negative physical retrofit need.

The same note-construction procedure and scoring functions used to estimate the calibration statistics are applied during inference. Each runtime scorer output is therefore standardized using the reference distribution generated by that same scorer.

\section{Exploratory Psychometric Analysis}
\label{app:psychometrics}
To further investigate potential connections between calibration-first supervision construction and latent-variable measurement, we conducted an exploratory analysis on several psychometric datasets. Unlike the primary validation experiments in Section~\ref{sec:exp:cross_domain}, these datasets do not contain naturally heterogeneous evidence sources operating on incompatible scales. Instead, they consist of multiple subscales designed to measure related psychological or cognitive constructs. Consequently, these experiments should be interpreted as exploratory evidence rather than as direct validation of the proposed framework.

Our goal was to evaluate whether calibrated fusion can recover latent structure extracted directly from item-level responses. For each dataset, domain or subscale scores were treated as evidence sources and fused using the procedure described in Section~\ref{sec:framework}. The resulting fused score $\kappa$ was then compared against the first principal component (PC1) computed from the complete item-response matrix.

\subsection{Datasets}

We considered three datasets, all sourced from the Open Psychometrics data repository~\citep{OpenPsychometrics}.

\paragraph{BIG5.}
The BIG5 personality inventory contains 50 items organized into five domains: Extraversion, Neuroticism, Agreeableness, Conscientiousness, and Openness. Domain scores were treated as evidence sources and fused into a single score $\kappa$. The BIG5 dataset consists of responses to the IPIP Big Five Factor Markers questionnaire~\citep{goldbergDevelopmentMarkersBigFive1992}.

\paragraph{HEXACO.}
The HEXACO personality inventory~\citep{leePsychometricPropertiesHEXACO2004} contains 240 items organized into six domains: Honesty–Humility, Emotionality, Extraversion, Agreeableness, Conscientiousness, and Openness. Domain scores were again treated as evidence sources and fused into $\kappa$.

\paragraph{MGKT.}
The Multifactor General Knowledge Test (MGKT)~\citep{OpenPsychometricsMGKT, KirkegaardJensen2025MGKT} contains objective knowledge questions spanning multiple domains of general knowledge. Domain-level scores were treated as evidence sources and fused into $\kappa$.

For each dataset, the fused score was compared against the first principal component extracted from the underlying item-level response matrix.

\subsection{Results}

Table~\ref{tab:psychometric_validation} summarizes the results.

\begin{table*}[t]
\centering
\begin{tabular}{lcc}
\toprule
Dataset & Method & Spearman $\rho$ with Item-Level PC1 \\
\midrule
BIG5 & Calibrated Fusion ($\kappa$) & 0.615 \\
HEXACO & Calibrated Fusion ($\kappa$) & 0.252 \\
MGKT & Calibrated Fusion ($\kappa$) & 0.987 \\
\bottomrule
\end{tabular}
\caption{Agreement between calibrated fusion and the dominant latent factor extracted from item-level responses.}
\label{tab:psychometric_validation}
\end{table*}

Several observations emerge. First, MGKT exhibits an extremely strong correspondence between $\kappa$ and the dominant latent factor extracted from item responses, with a Spearman correlation of $0.987$. This suggests that calibrated fusion almost perfectly reproduces the ranking implied by item-level latent structure.

Second, BIG5 demonstrates a moderate relationship between $\kappa$ and PC1. This is unsurprising because the Big Five framework was designed to represent several distinct personality dimensions rather than a single latent trait.

Third, HEXACO exhibits substantially weaker agreement. HEXACO was explicitly developed as a multidimensional personality framework containing conceptually distinct domains, and therefore a single fused score is not expected to align strongly with a one-dimensional latent representation.

\subsection{Discussion}

These findings are consistent with established psychometric theory. Instruments characterized by a strong dominant latent factor should exhibit stronger agreement between fused scores and latent-variable estimates. Conversely, instruments designed to measure multiple relatively independent constructs should exhibit weaker agreement because no single latent dimension adequately summarizes the entire instrument.

An important observation is that calibrated fusion and uncalibrated averaging produced nearly identical results across all three psychometric datasets. Unlike the Ames Housing and Breast Cancer Wisconsin experiments, the psychometric evidence sources were already measured on highly comparable scales and used similar response formats. As a result, little scale mismatch existed between evidence sources, leaving relatively little work for the calibration procedure to perform.

This behavior should not be interpreted as a failure of calibration. Rather, it demonstrates that the method behaves sensibly when sources are already aligned and that calibration becomes most important when heterogeneous evidence sources operate on substantially different scales.

Overall, these experiments provide exploratory evidence that calibration-first fusion is connected to latent-variable modeling. In particular, the MGKT results suggest that domain-level fusion can recover latent structure similar to that obtained from item-level analysis. However, because the psychometric evidence sources were already expressed on comparable scales, these experiments do not directly evaluate the primary contribution of this paper—calibration across heterogeneous information sources—and are therefore presented as supplementary evidence rather than as primary validation.

\end{document}